\newcommand{\BPD}{\mathrm{BPD}}
\newcommand{\GPD}{\mathrm{GPD}}
\newcommand{\mc}[1]{\mathcal{#1}}
\newcommand{\eps}{\varepsilon}
\newcommand{\indic}{\mathbbm{1}}
\newcommand{\ra}{\rightarrow}
\newtheorem{theorem}{Theorem}
\newtheorem{lemma}{Lemma}
\newtheorem{proposition}{Proposition}
\newtheorem{definition}{Definition}
\newmdenv[
  topline=false,
  bottomline=false,
  rightline = false,
  leftmargin=10pt,
  rightmargin=0pt,
  skipabove=16pt, 
  innertopmargin=0pt,
  innerbottommargin=0pt
]{innerproof}
\newenvironment{dproof}[1][Proof]{\begin{proof}[\textbf{\textit{Proof of #1.}}] \text{\vspace{2mm}} \begin{innerproof}}{\end{innerproof}\end{proof}\vspace{8mm}}
\patchcmd{\NAT@test}{\else \NAT@nm}{\else \NAT@hyper@{\NAT@nm}}{}{}
\icmltitlerunning{Bad-Policy Density: A Measure of Reinforcement-Learning Hardness}
\begin{document}

\twocolumn[
\icmltitle{Bad-Policy Density: A Measure of Reinforcement Learning Hardness}

\begin{icmlauthorlist}
\icmlauthor{David Abel}{dm}
\icmlauthor{Cameron Allen}{bro}
\icmlauthor{Dilip Arumugam}{st} \\
\icmlauthor{D. Ellis Hershkowitz}{cmu}
\icmlauthor{Michael L. Littman}{bro}
\icmlauthor{Lawson L.S. Wong}{ne}
\end{icmlauthorlist}
\icmlaffiliation{dm}{DeepMind}
\icmlaffiliation{bro}{Department of Computer Science, Brown University}
\icmlaffiliation{st}{Department of Computer Science, Stanford University}
\icmlaffiliation{cmu}{Department of Computer Science, Carnegie Mellon University}
\icmlaffiliation{ne}{Khoury College of Computer Sciences, Northeastern University}
\icmlcorrespondingauthor{David Abel}{dmabel@deepmind.com}

\icmlkeywords{Reinforcement Learning, MDP}
\vskip 0.3in
]
\printAffiliationsAndNotice{}

\begin{abstract}
Reinforcement learning is hard in general. Yet, in many specific environments, learning is easy.
%
What makes learning easy in one environment, but difficult in another?
%
We address this question by proposing a simple measure of reinforcement-learning hardness called the \textit{bad-policy density}. This quantity measures the fraction of the deterministic stationary policy space that is below a desired threshold in value.
%
We prove that this simple quantity has many properties one would expect of a measure of learning hardness. Further, we prove it is NP-hard to compute the measure in general, but there are paths to polynomial-time approximation. We conclude by summarizing potential directions and uses for this measure.
\end{abstract}

\section{Introduction}

%
Markov Decision Processes (MDPs) have long stood as a central model for characterizing the environments that reinforcement-learning agents inhabit. Indeed, the generality of the MDP (and its kin) allows for the description of small and simple environments such as grid worlds, but also large sophisticated ones such as the problem facing a robot organizing books on a shelf or even writing one of the books. A typical objective of research in reinforcement learning (RL) is to develop algorithms that can learn efficiently across the entire space of MDPs. For this reason, it is typically desirable to determine the worst case performance of an RL algorithm across all MDPs of a chosen size, and perhaps, horizon~\cite{Strehl2009,azar2013minimax}.

However, understanding an algorithm's learning efficiency with respect to the size of the MDP misses out on the potentially crucial presence of structure in a given learning problem that can alter the nature and difficulty of learning. Moreover, it is likely that not all MDPs are of interest---oftentimes, relevant subsets of finite MDP space are isolated as being of particular use, such as those with objects~\cite{diuk2008object}, features~\cite{guestrin2001max}, or deterministic transition dynamics~\cite{wen2013efficient}, to name a few. In this sense, it is likely the case that forcing algorithms to perform well on \textit{all} MDPs misses out on important insights that ensure algorithms are well behaved on the MDPs that matter by forcing them to be competent on chaotic environments in which the rapid acquisition of competence \textit{should} be impossible. Indeed, this insight is well established in the literature, with many notable examples establishing extreme efficiency in the presence of structure~\cite{mersereau2009structured,lattimore2014bounded,van2019comments,lattimore2020learning,tirinzoni2020novel}.

%
In this paper, we introduce Bad-Policy Density (BPD) as a simple new measure of RL hardness in finite MDPs. For a given MDP, the BPD measures the fraction of policies in the deterministic policy space that are below a desired threshold ($\tau$) in start-state value. We argue that this measure picks up on interesting structure of MDPs, and can help understand the settings in which RL algorithms might be able to achieve extreme degrees of sample efficiency. We also advocate for this measure as a potential mechanism for determining the difficulty of MDPs in practice---those with higher BPD might be considered more difficult. Further, we suggest that this measure can be useful as a diagnostic tool to assess the impact that priors and structures have on learning hardness, and more generally to identify what characteristics of a task give rise to harder learning.

%
\paragraph{Previous RL Hardness Measures.} Our proposal builds on the insights established by prior hardness measures for RL in finite MDPs (such as the mixing time \cite{kearns2002near}), which we now briefly summarize.
%
First, \citet{maillard2014hard} address the question ``How hard is my MDP?", with the environmental norm, measuring the maximum next-state variance of value throughout the MDP. This measure enables strong theoretical guarantees~\citep{zanette2019tighter} and picks up on an appealing notion of the kinds of MDPs that make RL more difficult---the more costly a mistake might be, the harder time a learning algorithm may have in learning effective behavior in the MDP. However, this measure is precisely zero for all deterministic MDPs, assigning all of them the lowest difficulty achievable under the measure. In this sense, there is room to sharpen our understanding of what constitutes a difficult MDP for RL. 
%
\citet{farahmand2011action} and \citet{bellemare2016increasing} measure hardness through the gap between the $Q^\star$-values of the best and second-best action across state--action pairs; small action gaps induce more challenging problems as an agent requires more samples to reduce estimation error and identify the optimal action.
The eluder dimension~\citep{russo2013eluder,osband2014model,wang2020reinforcement} of a value-function class is a worst-case measure of the maximal number of state--action pairs that must be observed before being able to extrapolate to unseen inputs. Intuitively, if each state--action pair of an MDP yields no information about any others, an agent has no capacity for generalization and is forced into prolonged exploration~\citep{du2019good,van2019comments}.

\citet{jiang2017contextual} propose the Bellman Rank as a suitable measure for the difficulty of Contextual Decision Processes, a generalization of MDPs. Intuitively, the Bellman rank considers the matrix of Bellman residuals induced by a value-function class and creates a link between the behavior policy used to arrive at a particular state--action pair and the value function whose greedy policy defines behavior from that state--action pair. \citet{sun2019model} introduce the witness rank as an analogue to Bellman rank for model-based RL whereas \citet{jin2021bellman} introduce a generalization, the Bellman eluder dimension, as the eluder dimension on the function class of Bellman residuals. One important note on all three measures is their dependence on not only the MDP but also a particular function class; in this way, these measures address the difficulty inherent to learning in the MDP in the specific function class containing the solution. In contrast, BPD focuses on the former source of hardness.

\citet{jaksch2010near} propose the \textit{diameter}---closely related to the span \cite{bartlett2009regal,fruit2018efficient}--- as a measure of MDP hardness, denoting the max number of steps between any two states in the environment. Naturally, a small diameter is suggestive of easier exploration, as the agent may acquire most information about the problem in few steps. Orthogonally, \citet{arumugam2021information} offer information sparsity as an information-theoretic measure of the difficulty of credit assignment within a MDP.
%
With the exception of the environmental norm, all of the aforementioned hardness measures are concerned with characterizing the difficulty of  generalization, exploration, or credit assignment. In contrast, BPD is agnostic to any one particular obstacle to efficient RL and instead simply asks what fraction of the solution space must be eliminated by any agent to solve an MDP, without regard for how efficiently individual agents may prune away sub-optimal solutions.


\section{Bad-Policy Density}

We now introduce and motivate the BPD measure. For a MDP $M = \langle \mc{S}, \mc{A}, R, T, \gamma \rangle$, we call $M$ finite when $|\mc{S}|, |\mc{A}| < \infty$ and assume that $M$ has an initial state $s_0 \in \mc{S}$.

\begin{definition}
The \textbf{Bad-Policy Density} (BPD) of a finite MDP $M$, for a chosen $\tau \in \mathbb{R}$, is given by,
\begin{equation}
        \BPD_{\tau}(M) := \frac{1}{|\Pi_M|} \sum_{\pi \in \Pi_M}\indic\left\{ V^{\pi}(s_0) \leq \tau\right\},
        \label{eq:bad_pol_den}
\end{equation}
for $\Pi_M$ the set of all deterministic mappings from $\mc{S}$ to $\mc{A}$.
\end{definition}

This measure answers a simple question about an MDP: What fraction of deterministic stationary policies are below $\tau$ in start-state value? The BPD has several straightforward properties, which we summarize in the following proposition.

%
\begin{proposition}
\label{prop:bpd_properties}
The BPD satisfies the following properties:
\begin{enumerate}[label=(\roman*)]
    \item For any real $\tau$ and MDP $M$, $0 \leq \BPD_\tau(M) \leq 1$.
    \item For any rational number $q \in [0,1]$, there exists a choice of $\tau$ and deterministic $M_d$ in which $\BPD_\tau(M_d) = q$.
    \item For any MDP $M$, if $\tau_1 < \tau_2$, then  $\BPD_{\tau_1}(M) \leq \BPD_{\tau_2}(M)$.
\end{enumerate}
\end{proposition}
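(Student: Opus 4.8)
The plan is to handle the three properties in turn, since they are largely independent. Property (i) is immediate: $\BPD_\tau(M)$ is an average of the indicator quantities $\indic\{V^\pi(s_0) \le \tau\}$, each of which lies in $\{0,1\}$, so the average lies in $[0,1]$. No further work is needed here beyond writing this observation down. Property (iii) is equally direct: fix $M$ and suppose $\tau_1 < \tau_2$. Then for every $\pi \in \Pi_M$ we have $\indic\{V^\pi(s_0) \le \tau_1\} \le \indic\{V^\pi(s_0) \le \tau_2\}$, because $V^\pi(s_0) \le \tau_1$ implies $V^\pi(s_0) \le \tau_2$. Summing this inequality over all $\pi \in \Pi_M$ and dividing by $|\Pi_M|$ gives the claim. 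Both of these I would dispatch in a sentence or two.

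The substantive part is property (ii): given a rational $q = a/b \in [0,1]$ with $a,b \in \N$ and $0 \le a \le b$, I need to construct a deterministic MDP $M_d$ and a threshold $\tau$ with $\BPD_\tau(M_d) = q$. The idea is to build an MDP whose deterministic policy space has size exactly $b$ (or a convenient multiple of $b$) and in which exactly $a$ of those policies fall at or below $\tau$ in start-state value. A clean construction: take a single non-trivial decision state $s_0$ with $b$ available actions $a_1, \dots, a_b$ (and all other states, if any, having a single action so they do not multiply the policy count), where action $a_i$ deterministically leads to an absorbing state of reward value $v_i$. Choose the rewards so that $v_1, \dots, v_a$ yield start-state value $\le \tau$ and $v_{a+1}, \dots, v_b$ yield start-state value $> \tau$ — e.g.\ set the first $a$ rewards to $0$, the remaining $b-a$ rewards to $1$, and pick $\tau = 0$ (or any value in $[0,1)$ if one wants strict separation, adjusting the inequality direction to match the ``$\le$'' in the definition). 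Then $|\Pi_{M_d}| = b$, the number of policies with $V^\pi(s_0) \le \tau$ is exactly $a$, and $\BPD_\tau(M_d) = a/b = q$. A small amount of care is needed at the boundary cases $q = 0$ and $q = 1$ (take all rewards above or all at/below $\tau$ respectively), and one should confirm the construction is genuinely deterministic in both transitions and rewards, which it is by design.

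The main obstacle, such as it is, is purely expository rather than mathematical: making sure the constructed MDP has \emph{exactly} $b$ deterministic policies, which forces every state other than the single branching state to be a ``no-choice'' state (one action). Using absorbing reward states with $|\mc{A}|=1$ handles this. Everything else is routine once the construction is fixed; there is no delicate estimate or limiting argument involved.
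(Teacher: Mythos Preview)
Your proposal is correct and follows essentially the same approach as the paper. For (i) and (iii) your one-line indicator arguments are exactly what the paper does (arguably more cleanly stated); for (ii) both you and the paper realize $q=a/b$ via a single decision state with $b$ actions and rewards partitioned into $a$ ``bad'' and $b-a$ ``good'' ones---the paper takes the minimalist $|\mc{S}|=1$ self-looping version, whereas you route actions to absorbing sinks, but the idea and the arithmetic are identical. One small caveat: in the paper's formalism $\mc{A}$ is a global action set, so your phrase ``absorbing reward states with $|\mc{A}|=1$'' does not literally type-check; either collapse to the single-state construction, or note that with a global $\mc{A}$ the irrelevant action choices at the sinks multiply numerator and denominator equally and the ratio is still $a/b$.
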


To summarize, this measure is applicable to both deterministic and stochastic MDPs; provides a universal, bounded scale of difficulty (the interval $[0,1]$) thereby allowing normalization-free comparison across MDPs; and ensures monotonic increase as $\tau$ increases. We explore more significant aspects of the measure shortly. 

%
\paragraph{Weaknesses.} There are clear shortcomings to the BPD. First, it is dependent on a potentially arbitrary choice of $\tau$. Across MDPs, it is unclear what the right choice might be. In this sense, a potentially more informative view of the hardness of an MDP is given by the cumulative graph of $\BPD_\tau(M)$, for $\tau \in \left[\textsc{VMin}, \textsc{VMax}\right]$; such a graph will illustrate what region of value space most policies lie in. Second, this measure is not immediately suitable to infinite MDPs. A natural consideration is to replace the enumeration of \autoref{eq:bad_pol_den} with a probability distribution---perhaps, in the assumption-free case, with the uniform distribution. Such a prospect is enticing, but is beyond the scope of this work. Third, the measure fails to capture the impact reward has on the learning dynamics of different RL algorithms. For instance, a shaped reward function that induces the same $\BPD_\tau(M)$ may very well lead to a dramatically easier learning problem for many algorithms. Fourth, the measure makes a commitment as to the significance of the value in $s_0$ (or, more generally, in expectation under a start state distribution). In long horizon tasks, this is not clearly the right perspective to take. Finally, the measure excludes stochastic policies from consideration in favor of focusing on deterministic policies.

In spite of these shortcomings, we take the BPD to serve as a useful and simple measure of RL difficulty in finite MDPs. A natural extension to infinite MDPs measures the probability of sampling a policy below a particular threshold, for some choice of probability distribution over the policy space. We next further establish the usefulness of the measure.

\section{Analysis: Properties of the BPD}

We next present three properties of the BPD: 1) there exists a simple RL algorithm whose episodic sample complexity depends only on the BPD and $\gamma$, and not on $|\mc{S}|$ or $|\mc{A}|$; 2) computing the BPD is NP-hard; 3) there is hope that a polynomial-time approximation is achievable.

\subsection{BPD-Dependent Sample Complexity}

Consider the following algorithmic structure, letting $E$ denote the number of episodes and $H$ denote the horizon:
\begin{align}
    \pi_t &= \texttt{choose}(\Pi_t) \label{eq:alg_choose}\\
    \hat{V}^\pi(s_0) &= \texttt{eval}(M,\pi_t, E, H) \label{eq:alg_run} \\
    \Pi_{t+1} &= \texttt{prune}(\Pi_t, \pi, \tau). \label{eq:alg_prune}
\end{align}

Suppose \texttt{choose} samples a policy uniformly at random, \texttt{eval} evaluates the sampled policy in the MDP, and \texttt{prune} removes $\pi$ if $\hat{V}^\pi(s_0) \leq \tau$ and returns the pruned version of $\Pi$. Let us refer to this simple strategy as the \textsc{PolicySampling} algorithm. We note that the sample complexity of this approach will depend on the $\BPD$ as follows, where $\tau$ inversely defines the magnitude of the mistake bound.
\begin{proposition}
\label{prop:param_sc}
Let $\delta \in (0,1)$ be the desired confidence parameter, $M$ the given MDP, $\eta \geq \textsc{VMax} - \tau$ the mistake threshold, and $\rho = 1-\BPD_\tau(M)$. Then, the episodic \textsc{PolicySampling} algorithm has, with probability $1-\delta$,  sample complexity upper bounded by
\begin{equation}
     \frac{\log_{\rho}\left(\delta\right)\textsc{RMax}^2\ln\left(\frac{2}{\delta}\right)}{2\eta^2(1-\gamma)^3}.
\end{equation}
\end{proposition}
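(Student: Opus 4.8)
The plan is to factor the cost into two pieces whose product is the claimed bound: (a)~how many policies the algorithm must draw before it draws --- and, after \texttt{eval}, keeps --- a policy of start-state value exceeding $\tau$, and (b)~how many episodes \texttt{eval} needs per policy so that \texttt{prune} makes the correct decision with high probability. Piece~(a) contributes the $\log_\rho(\delta)$ factor via a geometric-tail argument on the stream of uniform draws; piece~(b) contributes $\frac{\textsc{RMax}^2\ln(2/\delta)}{2\eta^2(1-\gamma)^3}$ via Hoeffding's inequality; and a union bound over the (few) invocations of \texttt{eval} glues the two together into a single high-probability statement.

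For piece~(b), fix a drawn policy $\pi$. A single length-$H$ episode produces a truncated discounted return that is unbiased for the $H$-step value and hence within $\gamma^H\textsc{VMax}$ of $V^\pi(s_0)$, so taking $H=\Theta\!\big(\tfrac{1}{1-\gamma}\log\tfrac{\textsc{RMax}}{\eta(1-\gamma)}\big)$ renders the truncation bias a negligible fraction of $\eta$. The $E$ episode returns are i.i.d.\ and lie in an interval of width $O(\textsc{RMax}/(1-\gamma))$, so Hoeffding gives $\Pr\big[\,|\hat V^\pi(s_0)-V^\pi(s_0)|>c\eta\,\big]\le 2\exp\!\big(-\Omega(E\eta^2(1-\gamma)^2/\textsc{RMax}^2)\big)$ for a suitable constant $c$. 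Driving the right side below $\delta$ needs $E=O\!\big(\tfrac{\textsc{RMax}^2}{\eta^2(1-\gamma)^2}\log\tfrac1\delta\big)$ episodes per call, and the remaining factor of $(1-\gamma)^{-1}$ in the statement enters from the $\Theta(1/(1-\gamma))$ steps per episode, i.e.\ from accounting for the cost in environment interactions. On the complementary event, a policy with $V^\pi(s_0)>\tau$ is never pruned and a policy whose suboptimality exceeds $\eta$ --- which, using $\eta\ge\textsc{VMax}-\tau$, has $V^\pi(s_0)<\tau$ --- is always pruned; call \texttt{prune} \emph{faithful} on such an event.

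For piece~(a), condition on the event that every \texttt{eval} performed is faithful. Then \texttt{prune} only deletes policies of true value $\le\tau$, so $\Pi_t$ always contains all value-$>\!\tau$ policies, and a single \texttt{choose} draws one of them with probability at least $\rho=1-\BPD_\tau(M)$ regardless of history. Hence the probability that the first $n$ rounds all draw a value-$\le\!\tau$ policy is at most $\BPD_\tau(M)^n$, and requiring this to be a small fraction of $\delta$ caps the number of rounds by the logarithmic-in-$\delta$ factor $\log_\rho(\delta)$; once a value-$>\!\tau$ policy is drawn on a faithful call it is retained and is $\eta$-optimal, so no further mistakes occur. Multiplying the round count by the per-round episode budget, and charging the union bound over these rounds into the $\log(1/\delta)$ inside $E$ (which only costs an absorbable $\log\log(1/\delta)$), yields the stated bound with overall failure probability $\delta$.

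The main obstacle is the circular dependence between the two pieces: the round count in~(a) is valid only when every \texttt{eval} is faithful, faithfulness is established by a union bound over all calls to \texttt{eval}, and the number of such calls is precisely the round count from~(a). The clean way through is to fix an a~priori budget of $\Theta(\log_\rho(\delta))$ rounds, prove faithfulness on that budget, and then show the geometric argument terminates within it --- while also bookkeeping the gap between the $c\eta$ estimation tolerance and $\textsc{VMax}-\tau$ carefully enough that ``faithful'' still leaves at least a $\rho$-fraction of acceptable policies in each $\Pi_t$. The Hoeffding and horizon-truncation estimates themselves are routine.
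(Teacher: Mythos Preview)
Your approach is essentially the paper's: decompose the cost into (a) a geometric-tail bound on the number of draws until a value-$>\!\tau$ policy appears and (b) a Hoeffding bound on the per-policy evaluation budget, then take the product, with the extra $(1-\gamma)^{-1}$ coming from the episode length. You are in fact more careful than the paper on several points it glosses over---the paper simply sets $h=\tfrac{1}{1-\gamma}$ without treating truncation bias, uses two unreconciled confidence parameters $\delta_1,\delta_2$, and never union-bounds over the $k$ calls to \texttt{eval} nor addresses the circular dependence you flag---but the skeleton is identical.
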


Clearly, this algorithm is entirely impractical in many contexts. However, it illustrates a sense in which the sample complexity of learning \textit{may} depend on the $\BPD$, rather than quantities such as $|\mc{S}|$ and $|\mc{A}|$. Identifying structure-dependent guarantees of this form for existing algorithms is a natural direction for future work. We also note that the \textsc{PolicySampling} algorithm bears a resemblance to sparse sampling \cite{kearns2002sparse}, the PAC bandit approach by \citet{goschin2013planning}, and to the \textsc{Olive} algorithm by \citet{jiang2017contextual}, which iteratively prunes candidate value-function approximators based on the inconsistency of Bellman residuals. 

%
\subsection{Computing BPD is NP-hard}

Naturally, the usefulness of a hardness measure is likely to depend on the practicality in applying it. We first note that computing the number of \textit{optimal} policies is poly-time solvable by assessing the number of actions in each state that yield $Q^*(s,a)$ as follows,
\begin{equation}
    \prod\limits_{s \in \mc{S}} \big|\{a \in \mc{A} : Q^{*}(s,a) = V^*(s)\}\big|.
\end{equation} Unfortunately, when we move to computing the BPD, it is \#P-hard in general. Concretely, we define the problem of computing $\BPD_\tau(M)$ as follows.

\begin{definition}
The \textsc{BPD-Problem} is: \texttt{given} a finite MDP $M$ and a $\tau \in \mathbb{R}$, \texttt{return} $\BPD_\tau(M)$.
\end{definition}

To analyze the difficulty of this problem, we inspect its decision counterpart,

\begin{definition}
The \textsc{BPD-Decision-Problem} is defined as follows: \texttt{given} a finite MDP $M$, $\tau \in \mathbb{R}$, and a proposed level of hardness $\kappa$, \texttt{return} true iff $\BPD_\tau(M) = \kappa$.
\end{definition}

\begin{theorem}
\textsc{BPD-Decision-Problem} is NP-hard.
\label{thm:bpd_np_hard}
\end{theorem}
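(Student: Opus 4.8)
The strategy is to reduce a numeric \textbf{NP}-complete problem to the \textsc{BPD-Decision-Problem}; I will use \textsc{Partition}. It is worth noting why a counting-flavoured reduction is the natural choice rather than, say, encoding $3$-\textsc{Sat} directly: for an explicitly represented finite MDP the optimal value $V^{*}(s_0)$ is computable in polynomial time (e.g.\ by value iteration or linear programming), so the hardness of deciding whether $\BPD_\tau(M)=\kappa$ cannot come from an embedded optimization --- it must come from the sum over $\Pi_M$ in \eqref{eq:bad_pol_den}. \textsc{Partition} supplies exactly this counting structure together with a symmetry that lets us pin the relevant count down exactly.

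Given positive integers $a_1,\dots,a_n$ with $\sum_i a_i = 2A$ (an odd total is a trivial no-instance), I would build a layered deterministic MDP $M$ with states $v_1,\dots,v_n$, start state $s_0=v_1$, and an absorbing sink $\bot$. Each $v_i$ has two actions, $\texttt{in}$ and $\texttt{out}$, both leading to $v_{i+1}$ (with $v_{n+1}:=\bot$); the reward for $\texttt{in}$ is chosen so that, after discounting, it contributes $a_i$ (e.g.\ $\gamma=\tfrac12$ and $R(v_i,\texttt{in})=2^{\,i-1}a_i$, or $\gamma=1$ with the sink keeping values finite), while $\texttt{out}$ and everything at $\bot$ earns $0$. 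Then a deterministic policy $\pi$ is simply a subset $S_\pi:=\{i:\pi(v_i)=\texttt{in}\}$, $V^\pi(s_0)=\sigma(S_\pi):=\sum_{i\in S_\pi}a_i$, and $|\Pi_M|=2^n$, so $\BPD_\tau(M)$ is exactly the fraction of subsets of sum at most $\tau$. Now set $\tau=A$ and $\kappa=\tfrac12$. The complementation involution $S\mapsto[n]\setminus S$ is a fixed-point-free bijection between $\{S:\sigma(S)\le A\}$ and $\{S:\sigma(S)\ge A\}$, which forces $2\cdot\#\{S:\sigma(S)\le A\}=2^n+\#\{S:\sigma(S)=A\}$, hence $\BPD_A(M)=\tfrac12+\#\{S:\sigma(S)=A\}/2^{\,n+1}$. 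Therefore $\BPD_A(M)=\tfrac12$ precisely when no subset sums to $A$, i.e.\ precisely when the \textsc{Partition} instance is a no-instance. The polynomial-time map $(a_1,\dots,a_n)\mapsto(M,\tau=A,\kappa=\tfrac12)$ then reduces (the complement of) \textsc{Partition} to the \textsc{BPD-Decision-Problem}, so an oracle for the latter decides \textsc{Partition} and the problem is \textbf{NP}-hard.

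I expect the delicate point to be this last calibration, and it is where the choice of source problem matters. Since the \textsc{BPD-Decision-Problem} tests the exact equality $\BPD_\tau(M)=\kappa$ and not an inequality, it does not suffice to build an MDP with merely ``many'' versus ``few'' bad policies: one needs the bad-policy count to be a value one can write down in the no-case while provably being strictly larger in the yes-case, and the reflection symmetry of \textsc{Partition} about $A$ is exactly what makes $\tfrac12$ that value. A reduction from \textsc{Subset-Sum} with a general target $t$ would lose this and instead have to compare $\BPD_{t}(M)$ with $\BPD_{t-1}(M)$ (a Turing rather than many-one reduction) or first isolate a unique witness. The remaining details are routine: checking that the $\gamma$ convention gives well-defined finite values with polynomially sized rewards, that states which never affect $V^\pi(s_0)$ (such as $\bot$ or any padding) leave the ratio in \eqref{eq:bad_pol_den} unchanged, and that the construction is consistent with the monotonicity of $\BPD_\tau$ in $\tau$ from \autoref{prop:bpd_properties}.
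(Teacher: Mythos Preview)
Your proof is correct and takes a genuinely different route from the paper. The paper reduces from general \textsc{SubsetSum} via an MDP whose first step is a \emph{stochastic} uniform fan-out from $s_0$ to states $s_1,\dots,s_N$ with $R(s_i,a_1)=u_i$, $R(s_i,a_2)=0$, so that $N\cdot V^\pi(s_0)$ equals the subset sum encoded by $\pi$; it then makes \emph{two} oracle calls, at thresholds $\tfrac{1}{N}(t\pm\eps)$, and tests whether the difference is positive --- exactly the ``compare $\BPD_t$ with $\BPD_{t-1}$'' Turing reduction you anticipated and deliberately avoided. Your reduction from \textsc{Partition} instead exploits the complementation symmetry $S\mapsto[n]\setminus S$ to pin $\kappa=\tfrac12$ down a priori, yielding a single-query reduction (many-one from co-\textsc{Partition}, hence Cook NP-hardness after one negation) and, as a side benefit, establishing hardness already for \emph{deterministic} MDPs, which the paper's stochastic fan-out does not. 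The trade-off is that the paper's two-query argument works for an arbitrary target $t$ and so dovetails more directly with the \#P-hardness corollary for the counting version, whereas your calibration is specific to the half-sum threshold.
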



All proofs are presented in the appendix. As an immediate corollary of the theorem, we note that the counting variant, \textsc{BPD-Problem}, is \#P-hard.

%

\subsection{Approximating the BPD}

In light of the computational intractability of computing the BPD, we instead seek to approximate it. In this section, we give methods for efficiently approximating the good-policy density (GPD), where $\GPD_\tau(M) = 1 - \BPD_\tau(M)$; all of the proofs and algorithms given in this section also apply directly to the bad-policy density but it will be more convenient to describe our results in terms of good-policy density. There are two kinds of approximations for which one might aim: 1) multiplicative approximations; 2) additive approximations. In the former, we can obtain estimates that are within an arbitrary fraction of the true quantity (and thus, are more desirable), while in the latter, we can obtain estimates that are within a chosen $\epsilon$ radius of the true value (which is problematic when the true quantity is near zero or one, as the BPD is likely to be). The additive form is obtained through application of Chernoff inequalities.

\begin{figure*}[h!]
    \centering
    \subfigure[Difficult $N$-Chain]{\includegraphics[width=0.31\textwidth]{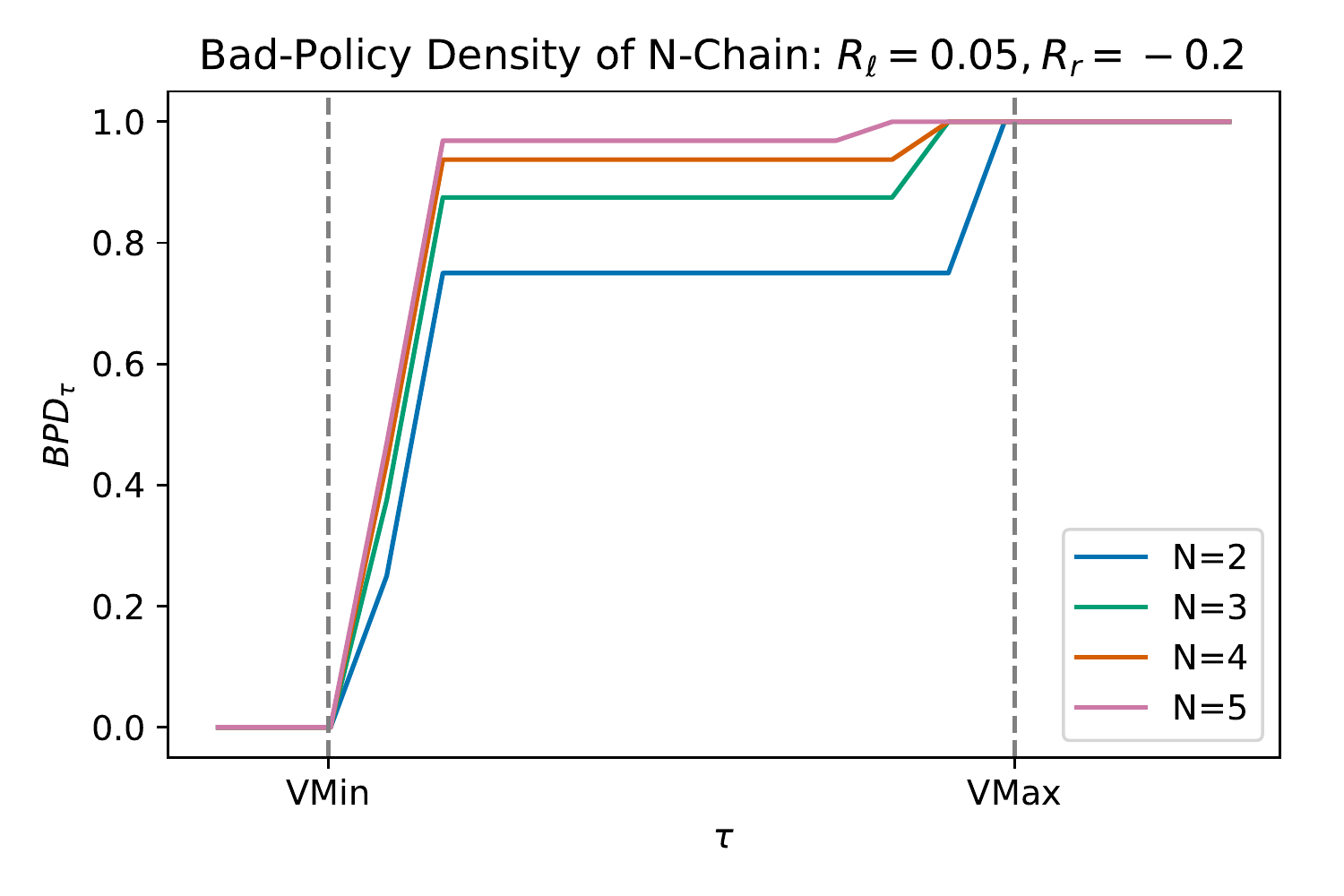}}
    \subfigure[Easy $N$-Chain]{\includegraphics[width=0.31\textwidth]{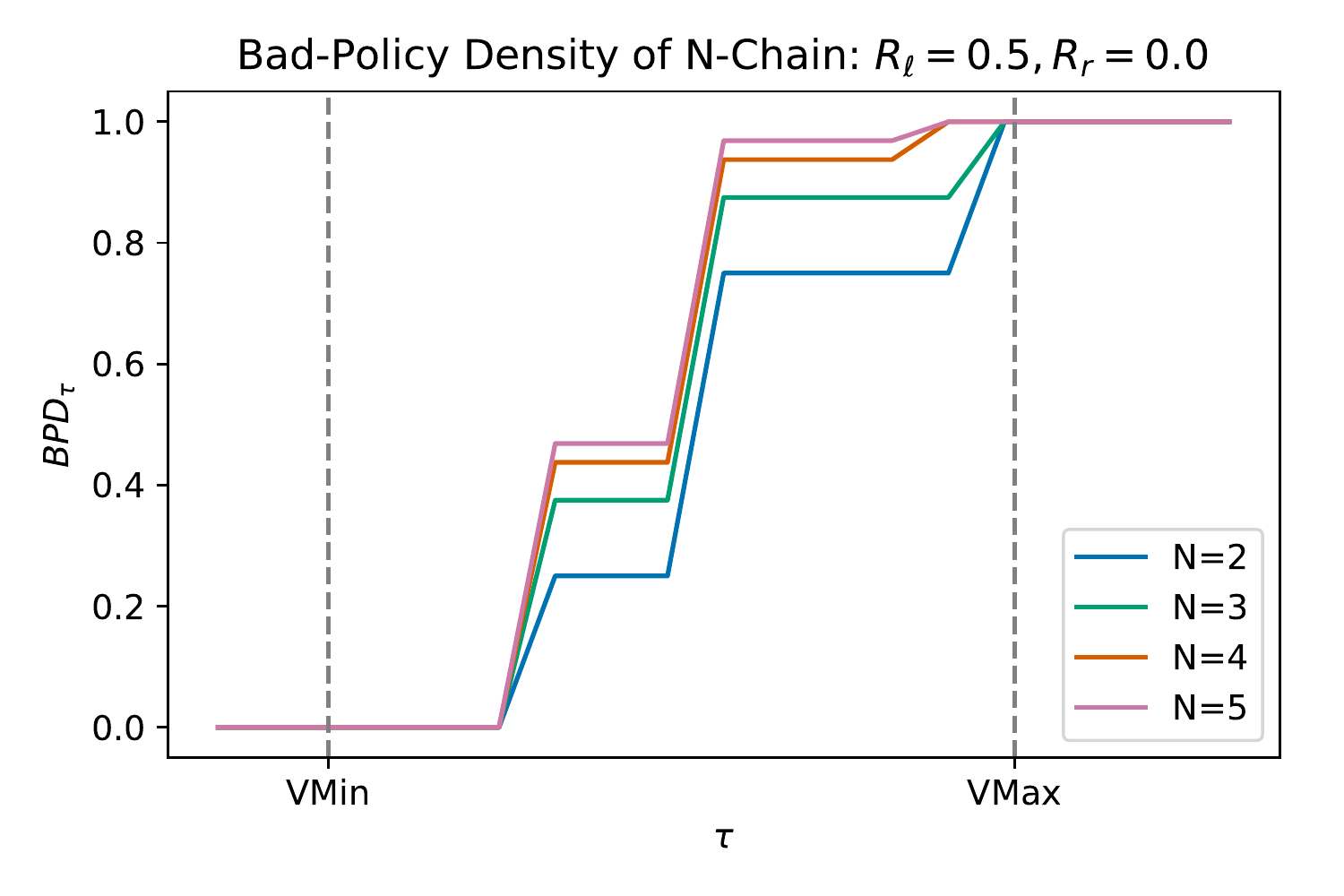}}
    \subfigure[BPD vs. $|\mc{S}|$]{\includegraphics[width=0.31\textwidth]{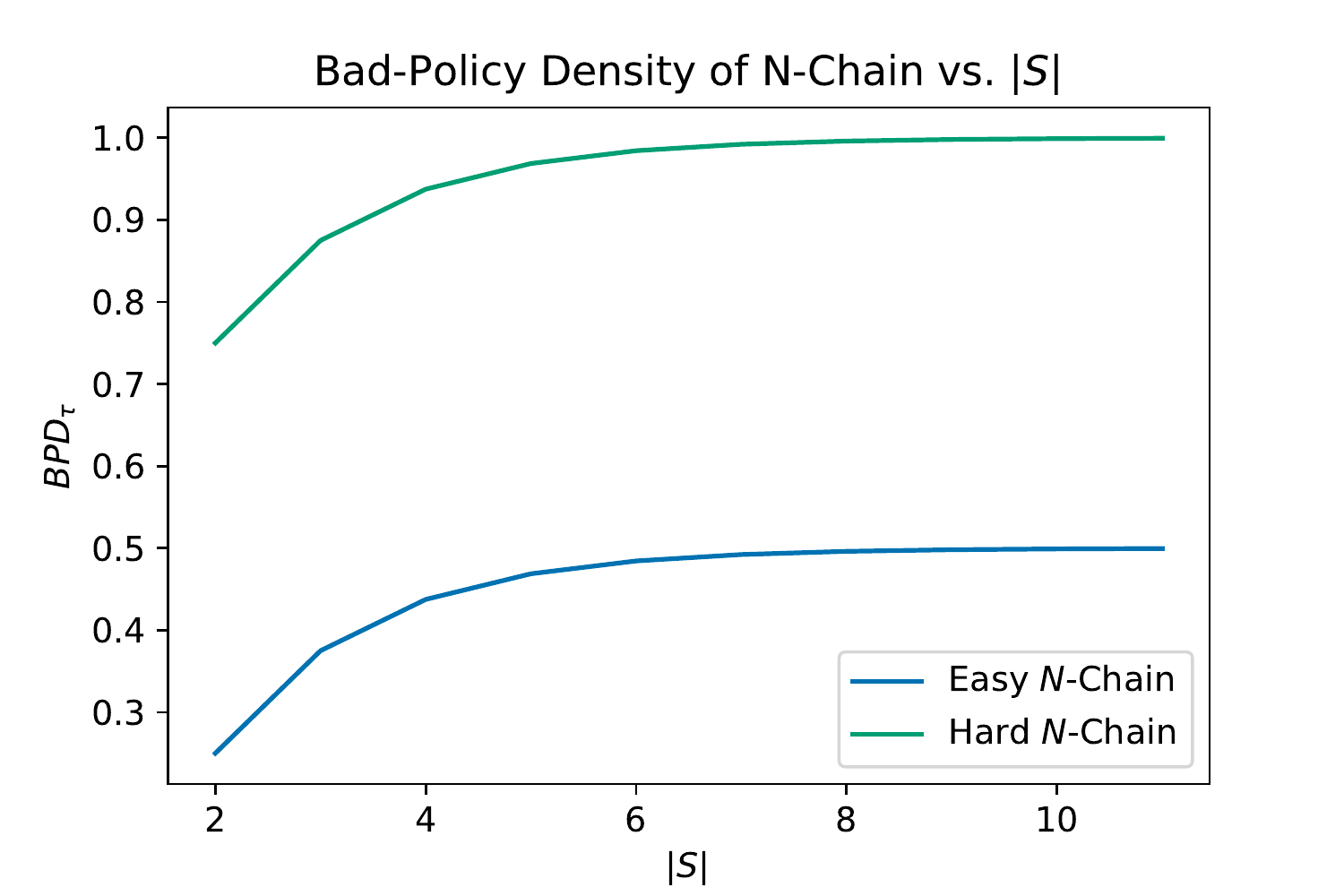}}
    \caption{The BPD in $N$-Chain problems.}
    \label{fig:bpd_chain}
\end{figure*}

\begin{proposition}
\label{prop:bpd_add_approx}
There is a poly-time algorithm that, given a finite MDP $M$, $\tau \in \mathbb{R}$ and constant $\eps > 0$ computes a value $\widehat{\GPD}_{\tau}(M)$ satisfying $|\widehat{\GPD}_{\tau}(M) - \GPD_{\tau}(M)| \leq \eps$ with high probability.\footnote{At least $1 - \frac{1}{\text{poly}(n)}$ where $n$ is the size of the input MDP.}
\end{proposition}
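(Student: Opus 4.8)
The plan is to estimate $\GPD_\tau(M)$ by straightforward Monte Carlo sampling of the deterministic policy space. First I would draw $N$ policies $\pi_1, \dots, \pi_N$ independently and uniformly at random from $\Pi_M$; since $|\Pi_M| = |\mc{A}|^{|\mc{S}|}$, a uniform sample is obtained simply by choosing an action uniformly at random in each state, which takes time $O(|\mc{S}|)$ per policy. For each sampled $\pi_i$, I would compute $V^{\pi_i}(s_0)$ exactly: fixing $\pi_i$ collapses $M$ into a Markov reward process, so $V^{\pi_i}$ is the unique solution of the linear system $V^{\pi_i} = R^{\pi_i} + \gamma T^{\pi_i} V^{\pi_i}$, i.e.\ $V^{\pi_i} = (I - \gamma T^{\pi_i})^{-1} R^{\pi_i}$, which for $\gamma < 1$ and rational problem data has a rational solution computable in polynomial time by Gaussian elimination. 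The estimator is then $\widehat{\GPD}_\tau(M) := \frac{1}{N}\sum_{i=1}^N \indic\{V^{\pi_i}(s_0) > \tau\}$.

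Next I would argue correctness and running time together. Each indicator $X_i := \indic\{V^{\pi_i}(s_0) > \tau\}$ is an independent Bernoulli random variable with mean exactly $\GPD_\tau(M)$, so $\bE[\widehat{\GPD}_\tau(M)] = \GPD_\tau(M)$ and Hoeffding's inequality gives $\Pr\bigl[|\widehat{\GPD}_\tau(M) - \GPD_\tau(M)| > \eps\bigr] \leq 2\exp(-2N\eps^2)$. Choosing $N = \bigl\lceil \tfrac{1}{2\eps^2}\ln(2\,\text{poly}(n)) \bigr\rceil = O(\eps^{-2}\log n)$ drives the failure probability below $1/\text{poly}(n)$; since $\eps$ is a constant, $N$ is polynomial in $n$, and each of the $N$ iterations does only $O(|\mc{S}|)$ work to sample a policy plus one polynomial-time linear solve, so the whole procedure runs in polynomial time. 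The identical argument applied to $\indic\{V^{\pi_i}(s_0) \leq \tau\}$ yields the corresponding additive estimate of $\BPD_\tau(M)$.

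The one place that needs care---and the main obstacle---is the exactness of the policy-evaluation step: the Hoeffding bound controls only the sampling error, so the per-sample classification $V^{\pi_i}(s_0) > \tau$ must be decided without error, since a policy whose value sits arbitrarily close to $\tau$ could otherwise be misclassified and introduce an uncontrolled bias into $\widehat{\GPD}_\tau(M)$. This is why I insist on solving the linear system exactly over the rationals rather than invoking an approximate evaluator such as the \texttt{eval} routine of the \textsc{PolicySampling} algorithm; with rational $R$, $T$, and $\gamma < 1$, the matrix $I - \gamma T^{\pi_i}$ is invertible and its inverse is rational with bit-complexity polynomial in the input size, so the comparison against $\tau$ (itself taken as a rational threshold) is exact. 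If one instead wanted to permit an approximate evaluator, one would need the additional assumption that no policy's value lies within the evaluator's accuracy window of $\tau$---a boundary condition the exact linear-solve route avoids entirely.
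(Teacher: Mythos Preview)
Your proposal is correct and follows essentially the same Monte-Carlo-plus-Hoeffding argument as the paper's own proof. You are in fact more careful than the paper in two respects: you use the correct $\Theta(\eps^{-2}\log n)$ sample count (the paper writes $\Theta(\eps^{-1}\log n)$, harmless only because $\eps$ is taken to be a constant), and you explicitly justify exact policy evaluation via the linear system $(I-\gamma T^{\pi})^{-1}R^{\pi}$, a point the paper leaves implicit.
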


However, achieving the more useful form of multiplicative approximations is more involved. Our next result gives a nearly-complete story as to a path for multiplicative approximation. To actually achieve the result, we first require sampling access to a uniform distribution over ``policies'' which closely resemble policies with at least $\tau$ value on $s_0$. In particular, given some $i \geq 0$ we require access to uniform samples from what we call $\tau$-quality $i$-controlled policies: These are ``policies'' where the agent achieves at least $\tau$ value on $s_0$ but only controls the first $i$ states (for an arbitrary ordering of the states) and is forced to behave optimally on the remaining states. We defer a formal definition of these ``policies'' to the appendix.

\begin{theorem}
There is a poly-time algorithm which, given constant $\eps > 0$ and sample access to a uniform distribution over $\tau$-quality $i$-controlled policies for each $i \geq 0$, returns a value $\widehat{\GPD}_{\tau}(M)$ satisfying $(1 - \eps ) \cdot \GPD_{\tau}(M) \leq \widehat{\GPD}_{\tau}(M) \leq (1 + \eps) \cdot \GPD_{\tau}(M)$ with high probability.
\label{thm:bpd_poly_approx}
\end{theorem}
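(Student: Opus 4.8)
The plan is to use the standard reduction from approximate counting to (approximately) uniform sampling via a telescoping product of ratios, in the spirit of Jerrum--Valiant--Vazirani, with the $\tau$-quality $i$-controlled policies playing the role of the intermediate distributions. Fix the arbitrary ordering $s_1,\dots,s_n$ of the $n=|\mc{S}|$ states and, for $0 \le i \le n$, let $G_i$ be the set of $\tau$-quality $i$-controlled policies. By definition $G_0$ is the set of policies that act optimally in every state and still attain $V(s_0) \ge \tau$; this set is either empty --- precisely when $V^*(s_0) < \tau$, which we can detect in polynomial time by solving $M$ (e.g.\ via linear programming), in which case $\GPD_\tau(M)=0$ and we return $0$ --- or has cardinality $|G_0| = \prod_{s\in\mc{S}} |\{a : Q^*(s,a)=V^*(s)\}|$, which is computed exactly in poly time as already noted in the paper. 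At the other extreme, $G_n$ is exactly the set of deterministic policies with $V^\pi(s_0)\ge\tau$, so $|G_n| = \GPD_\tau(M)\cdot|\mc{A}|^n$. Freeing a state can only enlarge the set, so $G_{i-1}\subseteq G_i$, and we may write
\[ \GPD_\tau(M) \;=\; \frac{|G_n|}{|\mc{A}|^n} \;=\; \frac{|G_0|}{|\mc{A}|^n}\prod_{i=1}^{n}\frac{|G_i|}{|G_{i-1}|}. \]
Since $|G_0|$ and $|\mc{A}|^n$ are known exactly, it suffices to estimate each ratio $\rho_i := |G_i|/|G_{i-1}| \ge 1$ to within a multiplicative $(1\pm \eps/3n)$ factor with failure probability at most $1/(3n)$.

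For a fixed $i$, note that $1/\rho_i = |G_{i-1}|/|G_i| = \Pr_{\pi\sim\mathrm{Unif}(G_i)}\big[\pi(s_i)\in\{a: Q^*(s_i,a)=V^*(s_i)\}\big]$ --- the probability that a uniformly random $\tau$-quality $i$-controlled policy happens to play an optimal action at the newly-freed state $s_i$. I would draw $\mathrm{poly}(n,|\mc{A}|,1/\eps)$ samples from the assumed oracle $\mathrm{Unif}(G_i)$ and report the reciprocal of the empirical frequency of this event; membership in $G_{i-1}$ is checkable in poly time because $Q^*$, $V^*$, and hence the optimal-action sets, are poly-time computable. A multiplicative Chernoff bound gives the required relative accuracy and confidence for each $\rho_i$, a union bound over the $n$ ratios controls the total failure probability, and $(1\pm\eps/3n)^n \le (1\pm\eps)$ (for $n$ large) propagates the per-ratio error through the product; scaling by the exact factor $|G_0|/|\mc{A}|^n$ then yields $\widehat{\GPD}_\tau(M)$ with $(1-\eps)\GPD_\tau(M)\le \widehat{\GPD}_\tau(M)\le (1+\eps)\GPD_\tau(M)$ with high probability. (The same argument, with $1-\widehat{\GPD}$, gives the multiplicative guarantee for $\BPD$ claimed in the section header.)

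The main obstacle will be showing that each ratio $\rho_i$ is bounded by a polynomial in the input size, since otherwise the sample count $\Theta(\rho_i \cdot n^2\eps^{-2}\log n)$ above is not polynomial. I would aim for $\rho_i \le |\mc{A}|$, which is equivalent to exhibiting, for every $\pi\in G_i$, at least one optimal action $a$ at $s_i$ such that replacing $\pi(s_i)$ by $a$ keeps $V(s_0)\ge\tau$ (each policy in $G_{i-1}$ then has at most $|\mc{A}|$ preimages in $G_i$ under the resulting total map). The natural route is a policy-improvement-type argument that exploits the fact that states $s_{i+1},\dots,s_n$ are already forced optimal under $\pi$: restricted greedy improvements confined to $s_1,\dots,s_i$ never decrease $V^\pi(s_0)$ and should drive the action at $s_i$ toward $Q^*$-optimality while preserving optimality of $s_{i+1},\dots,s_n$; the delicate point is that a greedy step is with respect to $Q^\pi$ rather than $Q^*$, so the argument must verify that the fixed point of this restricted process is actually consistent with the $i$-controlled structure defining $G_{i-1}$. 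If a clean polynomial bound on $\rho_i$ holds only under a mild additional assumption on $M$, I would localize the theorem and its proof to that case; the remaining ingredients --- the telescoping identity, the Chernoff estimation of each ratio, and the union bound with error propagation --- are routine.
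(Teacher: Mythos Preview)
Your high-level strategy---the Jerrum--Valiant--Vazirani telescoping product with sampled ratio estimates---is the same as the paper's. The gap is in your choice of intermediate sets. You take $G_i$ to be the policies that are literally $Q^*$-optimal on $s_{i+1},\dots,s_n$ and have \emph{actual} start-state value at least $\tau$, so that $G_0\subseteq\cdots\subseteq G_n$ and $|G_0|=\prod_s|\{a:Q^*(s,a)=V^*(s)\}|$. The paper instead defines the $\tau$-quality $i$-controlled policies as the set $T_i$ of \emph{all} deterministic policies whose \emph{optimal-completion} value (the value of the best policy agreeing with $\pi$ on $s_1,\dots,s_i$) is at least $\tau$. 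This gives the reverse chain $T_{|\mc{S}|}\subseteq\cdots\subseteq T_0$ with $|T_0|=|\mc{A}|^{|\mc{S}|}$, and the ratio bound $|T_{i+1}|/|T_i|\ge 1/|\mc{A}|$ follows from a short injection $T_i\setminus T_{i+1}\to T_{i+1}$: replace $\pi(s_{i+1})$ by the action that the $i$-level optimal completion of $\pi$ takes there.

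Under your definition the hoped-for bound $\rho_i\le|\mc{A}|$ is false, and the policy-improvement route you sketch cannot repair it. Take two states, two actions, $\gamma=0.9$, start $s_1$, with $s_1\colon a_1\to s_2$ reward $10$, $a_2\to s_2$ reward $0$; and $s_2\colon a_1\to s_1$ reward $0$, $a_2\to$ terminal reward $5$. Then $a_1$ is $Q^*$-optimal at both states, and with $\tau=4$ one gets $G_2=\{(a_1,a_1),(a_1,a_2),(a_2,a_2)\}$ (values $\approx 52.6,\,14.5,\,4.5$) but $G_1=\{(a_1,a_1)\}$, so $\rho_2=3>|\mc{A}|$. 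The policy $(a_2,a_2)\in G_2$ has no image in $G_1$ under your map: switching $\pi(s_2)$ to the $Q^*$-optimal action $a_1$ yields $(a_2,a_1)$, whose value is $0$. This is exactly the ``delicate point'' you flag---a $Q^*$-greedy step at $s_i$ need not be value-improving when $\pi$ is suboptimal on $s_1,\dots,s_{i-1}$---and it is a genuine obstruction rather than a technicality; with $k$ actions the same construction already gives $\rho_2=k^2-k+1$. The missing idea is to evaluate policies by their optimal-completion value rather than to constrain their later coordinates to $Q^*$-optimal actions; that change of definition is precisely what makes the ratio bound (and hence the whole telescoping argument) go through.
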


 We suspect such a distribution can be sampled from in polynomial time, as this procedure is closely related to other polynomial-time constructions. See \citet{jerrum1989approximating} for a similar result on approximating the number of matchings in a graph or \citet{jerrum2003counting} for a comprehensive overview of such approximate counting results.

\section{Discussion}

We conclude with a simple case study of the $\BPD$ in small MDPs, and by suggesting directions for future work.


\begin{figure}[!b]
    \centering
    \includegraphics[width=0.95\columnwidth]{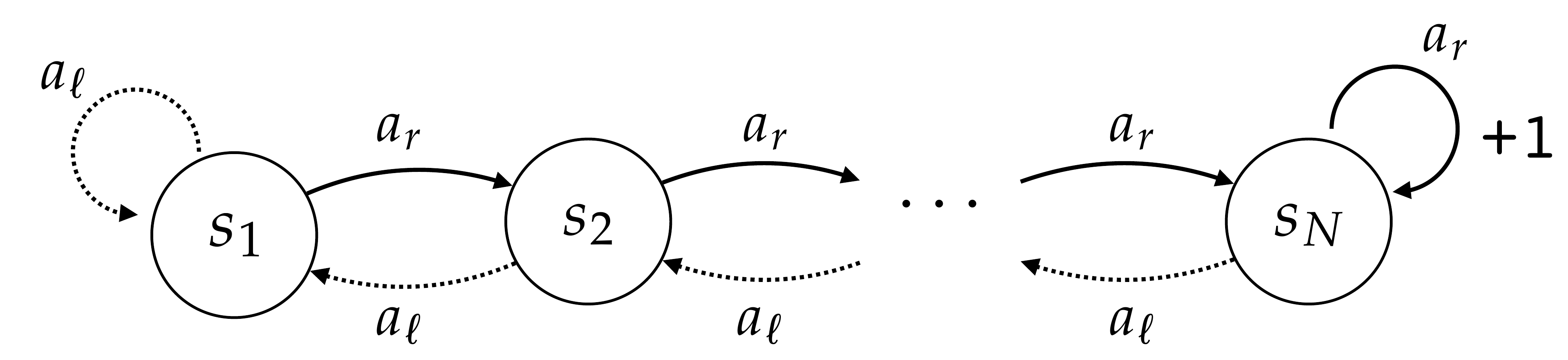}
    \caption{The structure of the $N$-Chain MDP.}
    \label{fig:nchain}
\end{figure}

%
\paragraph{A Small Example.} First, we examine the $\BPD$ of a simple set of MDPs. Because the approximation algorithm described by \autoref{thm:bpd_poly_approx} requires sampling a particular distribution, we here only examine hardness of small MDPs. Specifically, we consider an $N$-Chain graph MDP with $N$ states and $2$ actions, pictured in \autoref{fig:nchain}. The $a_r$ action moves the agent right, and the $a_\ell$ action moves the agent left along the chain. In the rightmost state of the chain, the reward for $a_r$ is $+1$. Otherwise, rewards are determined by constants $R_r$ and $R_\ell$.
%
As shown in \autoref{fig:bpd_chain}, we calculate the $\BPD$ of two variants of $N$-Chain for different settings of $\tau$. In the left figure, we inspect a variation of $N$-Chain where $R_\ell=0.05$ and $R_r=-0.2$, ensuring that the locally greedy action will take $a_\ell$ rather than $a_r$. In contrast, in the center figure, we inspect $N$-Chain where $R_\ell=0.5$ and $R_r=0$. Here, many more policies will be considered good for most values of $\tau$. Indeed, as expected, we find that as we vary $\tau$ from $\textsc{VMin}$ to $\textsc{VMax}$, the hardness of the case on the left sharply increases, whereas this increase is considerably more gradual in (b). In the right figure, we inspect the BPD of the two $N$-chain instances as we vary $N$ for a particular choice of $\tau$. As expected, we find that the harder variant of $N$-chain has considerably higher BPD, and that the impact of increased $N$ is most dramatic early on. Further details and an additional experiment are presented in the appendix.

\paragraph{Future Work.} We foresee many avenues for further research extending the $\BPD$. First, we might remove the BPD's dependence on choice of $\tau$ by instead measuring the \textit{cumulative} $\BPD$ across all choices of $\tau$ ($\int_{\textsc{VMin}}^{\textsc{VMax}} \BPD_\tau(M) d\tau$), rather than with respect to a fixed $\tau$. Analysis and computation of this cumulative quantity are a clear direction for future work. Next, the BPD might be useful to assess the contribution made by different kinds of structures and priors in RL---which ones most dramatically reduce the $\BPD$? The $\BPD$ could be used both as an objective (find the structure that minimizes $\BPD$), and as an evaluation (does structure $X$ or $Y$ most reduce $\BPD$?). Lastly, it is natural to extend the BPD to infinite MDPs, stochastic policies, and perhaps, learning algorithms. 

\section*{Acknowledgements}
The authors would like to acknowledge Will Dabney, Jelena Luketina, Clare Lyle, and Michal Valko for helpful comments and discussions.

\bibliographystyle{styles/icml2021}
\bibliography{ms}

\appendix

\section{Proofs}

We first provide proofs of central results.

\begin{dproof}[\autoref{prop:bpd_properties}]

First, simply note that by definition $\BPD_\tau(M)$ is minimal when \textit{all} policies are good (so $V^{\pi}(s_0) > \tau, \forall_{\pi \in \Pi_M}$, yielding $\BPD_\tau(M) = 0$. By the same reasoning, when the value of every policy is greater than $\tau$, we find $\BPD_\tau(M)$ is maximized at $1$. \\

Second, note that for any choice of rational $q \in [0,1]$, there must exist integers $z_1$ and $z_2$ such that $q = \frac{z_1}{z_2}$. Since $q$ can be zero, but non-negative, we note that $z_1$ and $z_2$ must take on non-negative quantities, too. 
Note that for any $q$, there exists an $x$ and a finite MDP with $|\mc{S}|$ and $|\mc{A}|$ such that $\frac{x}{|\Pi_M|} = \frac{x}{|\mc{A}|^{|\mc{S}|}}$ (in the trivial case, $|\mc{S}|=1$, and $|\mc{A}|$ can be any natural number). Then, choose $T$, $R$, and $\gamma$ such that $|\{\pi \in \Pi_M : V^{\pi}(s_0) \geq \tau\}| = x$. Note again that this may achieved in the trivial case when $|\mc{S}| = 1$, and $R(s_0,\pi_i(s_0)) > \tau (1-\gamma)$, for each $\pi_i$ in the good set, and $R(s_0,\pi_i(s_0)) \leq \tau (1-\gamma)$ for each $\pi_i$ in the bad set. \\

Third, note that as $\tau$ increases, the number of policies that are considered bad cannot decrease, as any policy where $V^{\pi}(s_0) \leq \tau$ will also adhere to $V^{\pi}(s_0) \leq \tau + \epsilon$, for arbitrarily small positive $\epsilon$. \qedhere
\end{dproof}

\begin{dproof}[\autoref{prop:param_sc}]
By Hoeffding's inequality, we bound the number of episodes of a given policy $\pi$ needed to obtain an $\eps \in \mathbb{R}$ accurate estimate of $V^{\pi}(s_0)$ as follows,
\begin{equation}
    \Pr\{|\hat{V}^\pi(s_0) - V^\pi(s_0)| \leq \eps\} \geq 1-\delta_1,
\end{equation}
with $\delta_1 = 2\exp\left(-\frac{2 m^2 \eps^2}{m\textsc{VMax}^2}\right)$. We let $\eps = \eta \geq \textsc{VMax} - \tau$, for the chosen $\tau$. Moreover we run each policy for $h = \frac{1}{1-\gamma}$ steps per episode to yield the estimate $\hat{V}$. \\

Next, note that BPD induces a geometric distribution on the number of sampled policies needed to sample at least one good policy. Specifically we can bound the error probability as:
\begin{align}
    (1- \BPD_\tau(M))^{k} &\leq \delta_2,
\end{align}
for $k$ the number of evaluated policies, and $\delta_2 \in (0,1)$ a confidence parameter. Then, for
\begin{equation}
    k = \log_{1-\BPD_\tau}\left(\delta_2\right)
\end{equation}

Thus, after $k m h$ time-steps, we will find a policy with $V^{\pi}(s_0) \geq \tau$ with high probability, where:
\begin{itemize}
    \item $k$ is the number of policies we need to sample.
    \[
    k \leq \log_{1-\BPD_\tau(M)}\left(\delta_2\right).
    \]
    
    \item $m$ is the number of episodes we need to evaluate each policy.
    \[
    m \leq \frac{\textsc{VMax}^2\ln\left(\frac{2}{\delta_1}\right)}{2\eta^2}
    \]
    
    \item $h$ is the number of steps per-episode we let each policy run,
    \begin{equation}
        h = \frac{1}{1-\gamma}.
    \end{equation}
\end{itemize}

Hence, letting the mistake bound $\eta \geq V^*(s_0) - \tau$,
\begin{align}
    k m h &\leq \log_{1-\BPD_\tau(M)}\left(\delta_2\right) \frac{\textsc{VMax}^2\ln\left(\frac{2}{\delta_1}\right)}{2\eta^2(1-\gamma)}, \\
    &= \log_{1-\BPD_\tau(M)}\left(\delta_2\right) \frac{\textsc{RMax}^2\ln\left(\frac{2}{\delta_1}\right)}{2\eta^2(1-\gamma)^3}.
\end{align}\qedhere
\end{dproof}

\begin{figure}[t!]
    \centering
    \includegraphics[width=0.65\columnwidth]{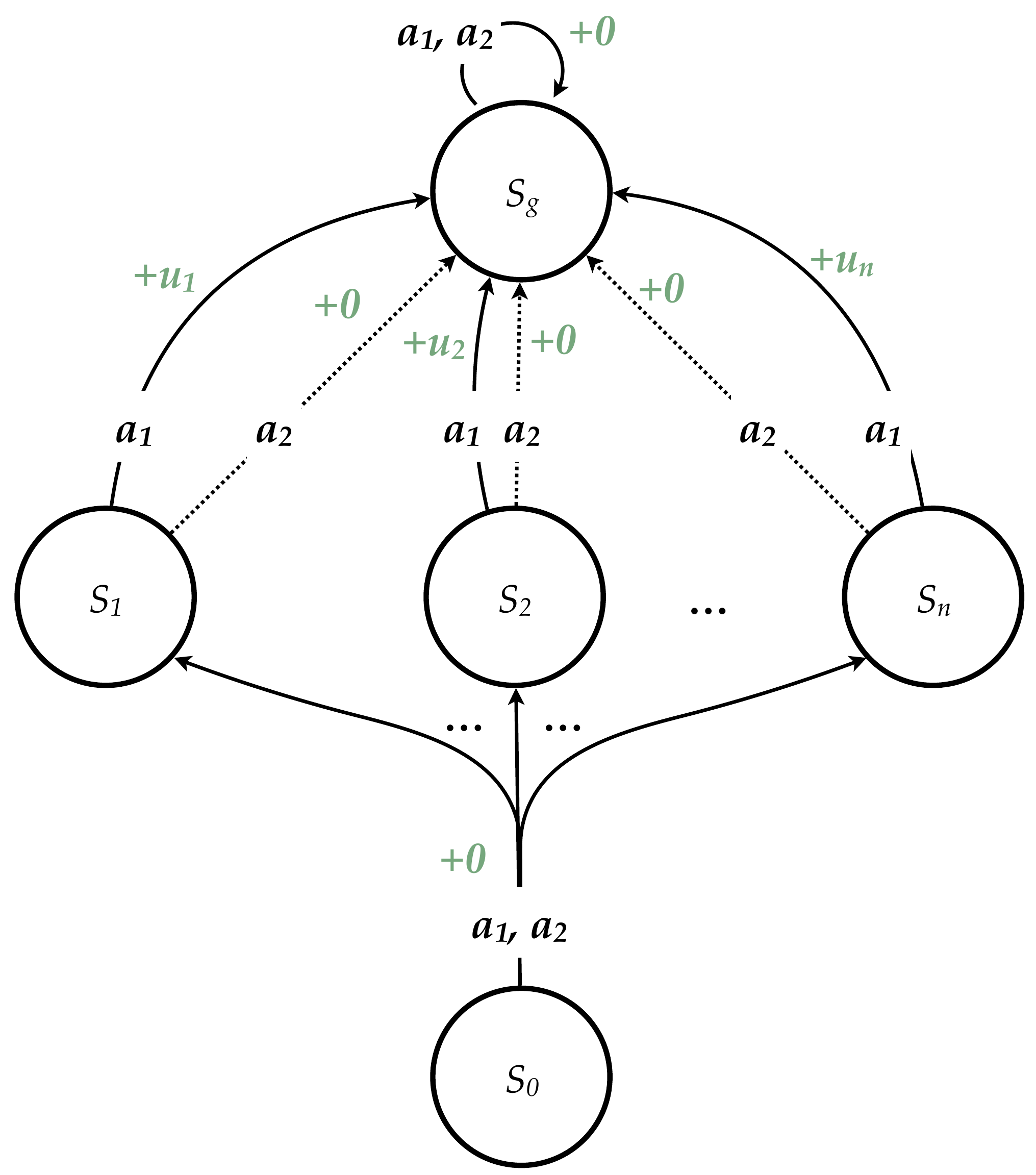}
    \caption{Given an instance of subset sum with $\mc{U} = \{u_1, \ldots, u_n\}$, we construct the above MDP and use a BPD solver to solve the subset sum instance.}
    \label{fig:ss_mdp_np_hard}
\end{figure}
\begin{dproof}[\autoref{thm:bpd_np_hard}]
Recall that an instance of the \textsc{SubsetSum} problem is given by a finite, non-empty set of $N$ non-negative integers $\mc{U} = \{u_1,u_2,\ldots,u_N\}$ and a target value $t$. In the decision version of the \textsc{SubsetSum} problem, we ask if there exists a subset $X \subseteq \mc{U}$ such that $\sum_{x \in X} x = t$. Recall that this problem is NP-complete. \\

Given an instance of \textsc{SubsetSum} as defined above, we construct an MDP $M = \langle \mc{S}, \mc{A}, R, T, \gamma \rangle$ as follows: add a single state $s_i$ to $\mc{S}$ for each element $u_i \in \mc{U}$. Also add two more states, an initial state $s_0$ and terminal state $s_T$. There are two discrete actions $\mc{A} = \{a_1,a_2\}$ and, recalling the correspondence between states and elements of $\mc{U}$, we define rewards as $R(s_i, a_1) = u_i$ and $R(s_i,a_2) = 0$,  $\forall i \in [N]$. All rewards for the initial and terminal states are $0$. We define the transition function as $T(s_i, a, s_T) = 1, \forall a \in \mc{A}, i \in [N]$ such that taking either action from a state corresponding to an element of $\mc{U}$ leads directly to the terminal state. Moreover, $T(s_0,a,s_i) = \frac{1}{N}, \forall a \in \mc{A}, i \in [N]$ which, from the initial state, transitions to a state matching an element of $\mc{U}$ uniformly at random, under either action. Finally, let the discount factor $\gamma = 1$. This MDP is pictured in \autoref{fig:ss_mdp_np_hard}.\\

Consider any deterministic policy $\pi:\mc{S} \rightarrow \mc{A}$ for MDP $M$ and construct a subset $X = \{u_i : u_i \in \mc{U}, s_i \in \mc{S} \setminus \{s_0,s_T\}, \pi(s_i) = a_1 \} \subseteq \mc{U}$ consisting of all states in $\mc{S}$ corresponding to elements of $\mc{U}$ where policy $\pi$ takes action $a_1$. Examining the Bellman equation for the value function induced by policy $\pi$, we see that:
\begin{align*}
    V^\pi(s_0) &= R(s_0, \pi(s_0)) \\
    &\qquad + \gamma \sum\limits_{s' \in \mc{S}} T(s' \mid s_0,\pi(s_0)) V^\pi(s') \\
    &= \frac{1}{N}\sum\limits_{i=1}^N V^\pi(s_i) \\
    &= \frac{1}{N}\sum\limits_{i=1}^N R(s_i,\pi(s_i)) \\
    &= \frac{1}{N}\sum\limits_{i=1}^N u_i \indic\left\{\pi(s_i) = a_1\right\} \\
    &= \frac{1}{N} \sum\limits_{x \in X} x
\end{align*}

From this, we see that computing $N \cdot V^\pi(s_0)$ for any policy $\pi$ yields the sum of the subset induced by $\pi$. Let $\Pi_M = \{\pi: \mc{S} \ra \mc{A}\}$ be the policy class for MDP $M$ and note that $|\Pi_M| = |\mc{A}|^{|\mc{S}|} = 2^{N+2} = 4|\mc{P}(\mc{U})|$ where $\mc{P}(\mc{U})$ denotes the power set of $\mc{U}$. We see that the action choices made by each policy $\pi$ at states $\{s_1,\ldots,s_N\}$ encode a unique subset of $\mc{U}$. Assume we have access to a polynomial-time algorithm for computing $\BPD_\tau(M)$. For a small constant $\eps > 0$, consider computing $(\BPD_{\frac{1}{N}(t+\eps)}(M) - \BPD_{\frac{1}{N}(t-\eps)}(M))$ where
\begin{align*}
    &\BPD_{\frac{1}{N}(t+\eps)}(M) \\
    &= \frac{1}{|\Pi_M|} \sum_{\pi \in \Pi_M}\indic\left\{ V^{\pi}(s_0) \leq \frac{1}{N}(t+\eps)\right\} \\
    &= \frac{1}{2^{N+2}} \sum_{\pi \in \Pi_M} \indic\left\{ NV^{\pi}(s_0) \leq (t+\eps)\right\} \\
    &= \frac{1}{4|\mc{P}(\mc{U})|} \sum_{\pi \in \Pi_M} \indic\left\{ \sum\limits_{x \in X_\pi} x \leq (t+\eps)\right\} \\
    &= \frac{4}{4|\mc{P}(\mc{U})|} \sum_{X_\pi \in \mc{P}(\mc{U})} \indic\left\{ \sum\limits_{x \in X_\pi} x \leq (t+\eps)\right\} \\
    &= \frac{1}{|\mc{P}(\mc{U})|} \sum_{X_\pi \in \mc{P}(\mc{U})} \indic\left\{ \sum\limits_{x \in X_\pi} x \leq (t+\eps)\right\} \\
\end{align*}

where the set $X_\pi$ is constructed for each policy $\pi \in \Pi_M$ as described above. Consequently, we've shown that $\BPD_{\frac{1}{N}(t+\eps)}(M)$ equals the fraction of subsets of $\mc{U}$ whose total element-wise sum is upper bounded by $(t+\eps)$; a similar statement follows for $\BPD_{\frac{1}{N}(t-\eps)}(M)$. With an arbitrarily small constant $\eps > 0$, we can use our polynomial-time algorithm for the BPD problem to compute $\indic \left\{(\BPD_{\frac{1}{N}(t+\eps)}(M) - \BPD_{\frac{1}{N}(t-\eps)}(M)) > 0\right\}$ and determine the existence of a subset $X \subseteq \mc{U}$ such that $\sum_{x \in X} x = t$, with arbitrarily high accuracy. Thus, we arrive at a polynomial-time algorithm for the decision version of the \textsc{SubsetSum} problem and the BPD problem must be NP-hard.\qedhere
\end{dproof}

\begin{dproof}[\autoref{prop:bpd_add_approx}]
Our algorithm is as follows. We sample $k = \Theta\left(\frac{1}{\eps}\log n \right)$ policies uniformly at random for some sufficiently large hidden constant. We return as our estimate of the good policy density $\widehat{\GPD}_{\tau}(M)$ the fraction of these policies that achieve value at least $\tau$ on $s_0$. Recall the Chernoff-Hoeffding bound which states that given $X := \sum_{i=1}^k X_i$ where each $X_i$ is an i.i.d. Bernoulli with $a_i \leq X_i \leq b_i$ with probability $1$ we have
\begin{align*}
    \Pr[|X - E[X]| \geq \eps] \leq 2 \exp\left(\frac{-2 \eps^2}{\sum_{i=1}^k(b_i - a_i)} \right).
\end{align*}
We apply this bound where each $X_i$ corresponds to one of our samples and is $1/k$ if in the sampled policy we have value at least $\tau$ on $s_0$ and $0$ otherwise. Notice that $a_i = 0$ and $b_i = 1/k$ for every $i$. Thus, we have $\GPD_{\tau}(M) = E[X]$ and $\widehat{\GPD}_{\tau}(M) = X$ and so
\newcommand{\poly}{\mathrm{poly}}
\begin{align*}
    \Pr[| \widehat{\GPD}_{\tau}(M) &- \GPD_{\tau}(M)| \geq \eps] \\
    &\leq 2 \exp \left(\frac{-2 \eps^2}{1/k^2} \right)\\
    & = 2 \exp(-\Theta(\log n)) \\
    & = \frac{1}{\poly(n)}.
\end{align*}
as desired.\qedhere
\end{dproof}

\begin{dproof}[\autoref{thm:bpd_poly_approx}]
    For this proof we will assume that every state has exactly two actions $a_1$ and $a_2$. The proof easily generalizes to general action spaces.

	In many ways the principle challenge in accurately estimating the good policy density is overcoming the ``needle in the haystack'' situation. In particular, if a constant fraction of all policies have value at least $\tau$ on the initial state $s_0$ then we can simply repeatedly sample a policy uniformly at random and then use the fraction of policies with value at least $\tau$ on $s_0$ among all sampled policies as our estimate of the good policy density. Standard Chernoff-bound-type arguments will show that such an estimate is very accurate with only polynomially-many samples.
	
	However, that's assuming a whole lot of needles! If, on the other hand, a very small number of policies have value at least $\tau$ on $s_0$ then no such sampling strategy could possibly work efficiently as one would never sample a policy with value at least $\tau$ on $s_0$.
	
	As an alternative we will utilize a general strategy first proposed by \cite{jerrum1989approximating} to estimate the permanent of a matrix (which is computationally equivalent to estimating the number of matchings in a bipartite graph).
	
	We will sketch this strategy in terms of MDPs. We will construct a sequence of MDP-like things $(M_i)_{i=1}^k$ where $M_k$ is the original MDP in which we would like to estimate the good policy density. Let $T_i$ be the policies in $M_i$ with value at least $\tau$ on $s_0$ and let $N_i := |T_i|$. We can estimate the good policy density by dividing $N_k$ by the total number of policies. To estimate $N_k$ we observe by a simple telescoping multiplication that
	\begin{align*}
	N_k = \frac{N_k}{N_{k-1}} \cdot \frac{N_{k-1}}{N_{k-2}} \cdot \ldots \cdot \frac{N_1}{N_0} \cdot N_0. \end{align*}
	
	Thus, to estimate the good policy density it suffices to estimate every $\frac{N_{i+1}}{N_{i}}$ and $N_0$.
	
	Let us suppose that $N_0$ is trivial to estimate given the way we constructed our sequence. Why should we expect that estimating the ratio $\frac{N_{i+1}}{N_{i}}$ is any easier than just estimating $N_k$? Well suppose that our sequence satisfied the following two smoothness properties
	\begin{enumerate}
		\item $T_{i+1} \subseteq T_i$
		\item $\frac{1}{2} \leq \frac{N_{i+1}}{N_i}$
	\end{enumerate}
	The existence of such smoothness properties suggests the following algorithm for estimating $\frac{N_{i+1}}{N_i}$: sample polynomially many policies independently and uniformly at random from $T_{i+1}$ and estimate $\frac{N_{i+1}}{N_i}$ as the proportion of sampled policies that are also in $T_i$. Since $\frac{1}{2} \leq \frac{N_{i+1}}{N_i}$ and $T_{i+1} \subseteq T_i$ we know that a constant fraction of the time when we take a sample and perform our check the sampled policy will indeed be in $T_{i+1}$; by standard Chernoff bound-type arguments mentioned above we will attain a good estimate of our ratio. Thus, our smoothness property has greatly increased the proportion of needles to hay and in this way allows us to use a Chernoff bound.
	
	Thus, to realize this strategy we must construct a sequence $(M_i)_{i=1}^k$ satisfying the above smoothness property where additionally $N_0$ is efficiently estimable.
	
	We proceed to discuss how to construct the aforementioned sequence. Our sequence will be $M_0, M_1, \ldots, M_{|S|}$ where $M_{|S|}$ is the MDP in which we would like to estimate the number of policies whose value on $s_0$ is at least $\tau$ , i.e.\ $N_{|S|}$.
	
	Our $M_i$ for $i < |S|$ won't exactly be an MDP per se. Rather, $M_i$ can be intuitively thought of as an MDP in which the agent only has control over the first $i$ states (for some canonical ordering over states) and in the remaining states the agent is forced to behave optimally.

	More formally, let $\Pi_i^\pi := \{\pi' : \pi'(s_j) = \pi(s_j) \text{ for } j \leq i \}$ be all policies consistent with $\pi$ on the first $i$ states. Since the Bellman equations have a fixed point, by viewing $\Pi_i^\pi$ as all policies in an MDP in which every state $s_j$ for $j \leq i$ has only the action taken by $\pi$ available in it, we know that there is a policy in $\Pi_i^\pi$ whose value is greater than or equal to that of all other policies in $\Pi_i^\pi$ on all states. Let $(\pi^*_i \mid \pi) \in \Pi_i^\pi$ be any such optimal completion of $\pi$. We can now define the ``value'' of policy $\pi$ in a state in $M_i$ as follows.
	
	\begin{definition}[$V_i^\pi(s)$]
		$V_i^\pi(s) := V^{(\pi^*_i \mid \pi)}(s)$. That is, the value of $\pi$ in state $s$ in $M_i$ is the value of the optimal completion of $\pi$, i.e.\ $(\pi^*_i \mid \pi)$, in this state.
	\end{definition}

	Notice that under this definition $V_{|S|}^\pi(s) = V^\pi (s)$ and so indeed $M_{|S|}$ just is our original MDP.
	
	We can now define the above set $T_i$.
	\begin{definition}[$T_i$]
	Let $T_i := \{\pi : \tau \leq V_i^\pi(s_0)\}$ be all policies that in $M_i$ have value at least $\tau$ on $s_0$. We call $T_i$ the set of all $\tau$-quality $i$-controlled policies.
	\end{definition}
	
	We begin by noting that the number of $\tau$-quality $i$-controlled policies is trivial to compute in $N_0$ since no matter what the policy of the agent, the agent is forced to behave optimally on all states. Also, we may assume that $V^*(s_0) \geq \tau$ since otherwise the good policy density is always $0$ and can be trivially computed as such.
	\begin{lemma}
	$N_0 = |A|^{|S|}$ if $V^*(s_0) \geq \tau$.
	\end{lemma}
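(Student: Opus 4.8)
The plan is to unwind the definitions of $M_0$, of the surrogate value $V_0^\pi$, and of $T_0$, and to observe that when $i = 0$ the agent controls no states at all, so its behaviour is pinned to an optimal policy regardless of $\pi$. Everything then follows by counting.

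Concretely, the first step is to record that $\Pi_0^\pi = \{\pi' : \pi'(s_j) = \pi(s_j)\text{ for }j \leq 0\}$ imposes no constraints, so $\Pi_0^\pi = \Pi_M$ for every $\pi$; hence the optimal completion $(\pi^*_0 \mid \pi)$ is simply a globally optimal policy of $M$, whose existence is the simultaneous-optimality / fixed-point property of the Bellman optimality equations already invoked in the surrounding construction. The second step is then immediate: $V_0^\pi(s) = V^{(\pi^*_0 \mid \pi)}(s) = V^*(s)$ for every state $s$ and every policy $\pi$, and in particular $V_0^\pi(s_0) = V^*(s_0)$.

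The third step combines this with the hypothesis. Since $V_0^\pi(s_0) = V^*(s_0) \geq \tau$ for every $\pi$, the defining inequality $\tau \leq V_0^\pi(s_0)$ of $T_0$ holds for all $\pi \in \Pi_M$, so $T_0 = \Pi_M$ and therefore $N_0 = |T_0| = |\Pi_M| = |A|^{|S|}$, using that each of the $|S|$ states has $|A|$ available actions. I would close by remarking that the complementary case $V^*(s_0) < \tau$ is the degenerate one already flagged just before the lemma: then no policy attains value $\tau$ on $s_0$, so $\GPD_\tau(M) = 0$ and can be reported directly.

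There is no real obstacle here; the lemma is essentially a definitional sanity check that underpins the telescoping argument. The only point worth stating carefully is the well-definedness of $(\pi^*_0 \mid \pi)$ — that a single deterministic policy can be value-maximizing in all states simultaneously — which I would cite as standard MDP theory rather than reprove, since it is exactly the property already used when the construction introduces $(\pi^*_i \mid \pi)$.
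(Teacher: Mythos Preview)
Your proposal is correct and matches the paper's own treatment: the paper offers no formal proof beyond the remark that in $M_0$ the agent is forced to behave optimally on all states, so every policy trivially has value $V^*(s_0)\geq\tau$ at $s_0$. Your unwinding of the definitions is a faithful (and slightly more explicit) rendering of exactly that observation.
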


	We now prove the above two smoothness properties.
	
	\begin{lemma}
	$T_{i+1} \subseteq T_i$.
	\end{lemma}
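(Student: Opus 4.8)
The plan is to exploit the nesting of the consistency classes $\Pi_i^\pi$. First I would observe that $\Pi_{i+1}^\pi \subseteq \Pi_i^\pi$ for every policy $\pi$: a policy $\pi'$ that agrees with $\pi$ on $s_1, \ldots, s_{i+1}$ in particular agrees with $\pi$ on $s_1, \ldots, s_i$. Next I would reinterpret $V_i^\pi(s)$ as a pointwise maximum over $\Pi_i^\pi$. Indeed, $(\pi^*_i \mid \pi)$ was defined as an optimal completion of $\pi$, meaning it dominates every policy in $\Pi_i^\pi$ on every state simultaneously — this is the standard MDP fact (existence of a fixed point of the Bellman optimality operator on the restricted MDP in which only $\pi$'s action is available in each of the first $i$ states) that the text already invokes. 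Hence $V_i^\pi(s) = \max_{\pi' \in \Pi_i^\pi} V^{\pi'}(s)$ for all $s$, and likewise $V_{i+1}^\pi(s) = \max_{\pi' \in \Pi_{i+1}^\pi} V^{\pi'}(s)$.

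Combining these two facts, maximizing the value over the larger set $\Pi_i^\pi$ can only increase it, so $V_i^\pi(s) \geq V_{i+1}^\pi(s)$ for every state $s$, in particular $s = s_0$. Therefore, if $\pi \in T_{i+1}$, then $\tau \leq V_{i+1}^\pi(s_0) \leq V_i^\pi(s_0)$, so $\pi \in T_i$, which is exactly the claim $T_{i+1} \subseteq T_i$. The only step that requires any care is the second one — verifying that the optimal completion dominates pointwise on all states rather than merely on $s_0$, which is what lets the monotonicity pass to $V_i^\pi(s_0)$ — but since this is precisely the MDP fact already used to define $(\pi^*_i \mid \pi)$, I do not expect any real obstacle; the argument is a short monotonicity-under-relaxation observation.
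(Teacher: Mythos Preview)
Your argument is correct and is essentially the same as the paper's: both use the nesting $\Pi_{i+1}^\pi \subseteq \Pi_i^\pi$ together with the pointwise-dominance property of the optimal completion to conclude $V_i^\pi(s_0) \geq V_{i+1}^\pi(s_0) \geq \tau$. The only cosmetic difference is that you phrase it as ``max over a larger set is at least max over a smaller set,'' whereas the paper phrases it as ``$(\pi_{i+1}^* \mid \pi) \in \Pi_i^\pi$ so $(\pi_i^* \mid \pi)$ does at least as well.''
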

	\begin{proof}
		Consider a $\pi \in T_{i+1}$. We will show that $\pi \in T_i$. That is, we know that $V_{i+1}^\pi(s) = (\pi_{i+1}^* \mid \pi) \geq \tau$ and we must argue that $V_i^\pi(s) = (\pi_i^* \mid \pi) \geq \tau$ for every state $s$. However, we need only notice that $\Pi_{i+1}^\pi \subseteq \Pi_i^\pi$ since the former is all policies consistent with $\pi$ on the first $i+1$ states and the latter is all policies consistent with $\pi$ on the first $i$ states. Thus, we know that $(\pi_{i+1}^* \mid \pi) \in \Pi_i^\pi$ and so an optimal policy in $\Pi_i^\pi$---namely $(\pi_i^* \mid \pi)$---must achieve at least the same value as $(\pi_{i+1}^* \mid \pi)$ on every state---namely at least $\tau$ on $s_0$.
	\end{proof}

	\begin{lemma}
	$\frac{1}{2} \leq \frac{N_{i+1}}{N_i}$
	\end{lemma}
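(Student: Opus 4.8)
The plan is to build an at-most-two-to-one map $f\colon T_i \to T_{i+1}$ and read off $N_i \le 2N_{i+1}$ from a fiber count. Recall that for a policy $\pi$ the optimal completion $(\pi_i^* \mid \pi)$ is an element of $\Pi_i^\pi$ that dominates every policy of $\Pi_i^\pi$ on all states (existence was established above), and that $\Pi_i^\pi$ — hence a choice of $(\pi_i^* \mid \pi)$ — depends only on the restriction of $\pi$ to the first $i$ states. Let $a^\star(\pi)$ be the action that $(\pi_i^* \mid \pi)$ plays at the state $s_{i+1}$, and define $f(\pi)$ to be the policy that agrees with $\pi$ at every state except $s_{i+1}$, where it instead plays $a^\star(\pi)$.

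First I would verify that $f$ sends $T_i$ into $T_{i+1}$. Fix $\pi \in T_i$ and set $\pi' = f(\pi)$. Since $\pi'$ agrees with $\pi$ on the first $i$ states and plays $a^\star(\pi)$ at $s_{i+1}$, we have $\Pi_{i+1}^{\pi'} \subseteq \Pi_i^\pi$; moreover $(\pi_i^* \mid \pi) \in \Pi_{i+1}^{\pi'}$, because this policy lies in $\Pi_i^\pi$ and, by definition of $a^\star(\pi)$, plays $a^\star(\pi)$ at $s_{i+1}$. A policy optimal over the larger feasible set $\Pi_i^\pi$ is in particular optimal over the subset $\Pi_{i+1}^{\pi'}$ to which it belongs, so $V_{i+1}^{\pi'}(s_0) = V^{(\pi_i^* \mid \pi)}(s_0) = V_i^{\pi}(s_0) \ge \tau$, i.e.\ $\pi' \in T_{i+1}$.

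Next I would bound the fibers of $f$. If $f(\pi_1) = f(\pi_2)$ then each of $\pi_1,\pi_2$ agrees with this common image at every state other than $s_{i+1}$, hence $\pi_1$ and $\pi_2$ agree with each other everywhere except possibly at $s_{i+1}$; since there are only two actions, each fiber of $f$ has size at most $2$. Therefore $N_i = |T_i| = \sum_{\pi' \in f(T_i)} |f^{-1}(\pi')| \le 2\,|f(T_i)| \le 2\,|T_{i+1}| = 2 N_{i+1}$, and rearranging gives $\tfrac12 \le \tfrac{N_{i+1}}{N_i}$. The denominator is nonzero: under the standing assumption $V^*(s_0) \ge \tau$, a globally optimal policy lies in every $T_i$ (fixing it on the first $i$ states can only raise the value of its optimal completion), so each $N_i \ge 1$.

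The only delicate point — the main obstacle — is the simultaneous bookkeeping behind $f$: to guarantee $f(\pi) \in T_{i+1}$ we are forced to reset the $s_{i+1}$-action to a specific value, whereas to keep every fiber of size at most $|A| = 2$ we must leave all other coordinates of $\pi$ untouched; the write-up should make explicit that these two demands are compatible. Everything else is immediate — the value-theoretic content reduces to the triviality that an optimizer over a set stays an optimizer over any subset containing it, and the counting is a one-line fiber estimate (the argument, and this bound of $1/|A|$ in place of $1/2$, generalizes verbatim to larger action spaces).
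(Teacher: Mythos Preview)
Your proof is correct and rests on the same key idea as the paper's: force the action at $s_{i+1}$ to agree with the optimal completion $(\pi_i^*\mid\pi)$, so that this completion witnesses membership in $T_{i+1}$. The packaging differs slightly. The paper restricts attention to $T_i\setminus T_{i+1}$ and observes that for such $\pi$ the optimal completion must \emph{disagree} with $\pi$ at $s_{i+1}$; flipping that one action therefore yields an \emph{injection} $T_i\setminus T_{i+1}\hookrightarrow T_{i+1}$, and the bound follows from $|T_i\setminus T_{i+1}|\le|T_{i+1}|$ together with $T_{i+1}\subseteq T_i$. You instead define $f$ on all of $T_i$ by directly overwriting $\pi(s_{i+1})$ with $a^\star(\pi)$ and argue via a fiber count that $f$ is at most two-to-one. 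Your route is marginally cleaner: it avoids the case split, does not invoke the containment $T_{i+1}\subseteq T_i$, and, as you note, immediately gives the sharper bound $N_{i+1}/N_i\ge 1/|\mc{A}|$ for general action spaces. The paper's route makes more visible \emph{why} the map lands in $T_{i+1}$ in the nontrivial case, since it explicitly derives that $(\pi_i^*\mid\pi)$ must flip the $(i{+}1)$st action when $\pi\notin T_{i+1}$.
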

	\begin{proof}
		To show that $\frac{1}{2} \leq \frac{N_{i+1}}{N_i}$, we will construct an injective function $f : T_{i} \setminus T_{i+1} \to T_{i+1}$. Letting $\lnot a_1 = a_2$ and $\lnot a_2 = a_1$, our function is $f(\pi) = \pi'$ where $\pi'$ is defined as follows:
		\begin{align*}
		\pi' := \begin{cases} \lnot \pi(s_j) & \text{if $j = i + 1$}\\ \pi(s_j) & \text{o/w}\end{cases}
		\end{align*}
		
		That is, $f$ outputs a policy that is identical to $\pi$ but which takes the opposite action as $\pi$ on state $i+1$. To complete our proof we argue that if $\pi \in T_{i} \setminus T_{i+1}$ then $f(\pi) \in T_{i+1}$. 
		
		To do so, it suffices to argue that  $(\pi_i^* \mid \pi) \in \Pi^{f(\pi)}_{i+1}$. Since $(\pi_i^* \mid \pi)$ matches $\pi$ on the first $i$ states and $f(\pi)$ matches $\pi$ on the first $i$ states, we know that $f(\pi)$ also matches $(\pi_i^* \mid \pi)$ on the first $i$ states. However, recall that $(\pi_i^* \mid \pi)$ has value at least $\tau$ on $s_0$, but $(\pi_{i+1}^* \mid \pi)$ does not and so we know that $(\pi_i^* \mid \pi) \not \in \Pi^{\pi}_{i+1}$. It must therefore be the case that $\pi(s_{i+1}) \neq (\pi_i^* \mid \pi) (s_{i+1})$ which is to say that $f(\pi)(s_{i+1}) = (\pi_i^* \mid \pi)$; that is $f(\pi)$ matches $(\pi_i^* \mid \pi)$ on the $(i+1)$th state. But then $f(\pi)$ matches $(\pi_i^* \mid \pi)$ on all of the first $(i+1)$th states and so we know that $(\pi_i^* \mid \pi) \in \Pi^{f(\pi)}_i$.
	\end{proof}

	Thus, we have constructed our sequence satisfying the aforementioned smoothness properties and we can easily compute $N_0$. Assuming uniform sampling access to the $\tau$-quality $i$-controlled policies, standard Chernoff bound arguments show we can estimate each $\frac{N_{i+1}}{N_i}$ up to a multiplicative $(1 \pm \eps)$ using the previously-mentioned strategy in poly-time. By our earlier arguments this suffices to estimate the good policy density up to a multiplicative $(1 \pm \eps)$ in poly-time.\qedhere
\end{dproof}

\section{Experimental Details}

We next provide additional details about the $N$-chain experiment in \autoref{fig:bpd_chain}, and an additional experiment of the same form in the $4\times 3$ Russell \& Norvig grid world \cite{russell1995modern}.

\paragraph{$\bm{N}$-Chain Details.} As discussed, the results pictured in \autoref{fig:bpd_chain} highlight the change of $\BPD_\tau(M)$ for different choices of $\tau$ and $M$. We compute the $\BPD_\tau(M)$ for each $M$ by exactly computing the start-state value of every deterministic policy, which is feasible given the size of the MDPs. The start state is the left-most state in each chain, and $\gamma$ was set to $0.95$. For each MDP, we compute $\BPD_\tau(M)$ for 12 choices of $\tau$, starting from $\textsc{VMin}$ and incrementing by $\frac{\textsc{VMax} - \textsc{VMin}}{12}$ up to $\textsc{VMax}$. In general, we anticipate that most of the interesting action will happen relatively close to $\BPD_\tau(M) \approx 1$, as on most MDPs of relevance the policy space likely only contains a few good policies.

\begin{figure}[t!]
    \centering
    \subfigure[With Lava]{\includegraphics[width=0.45\textwidth]{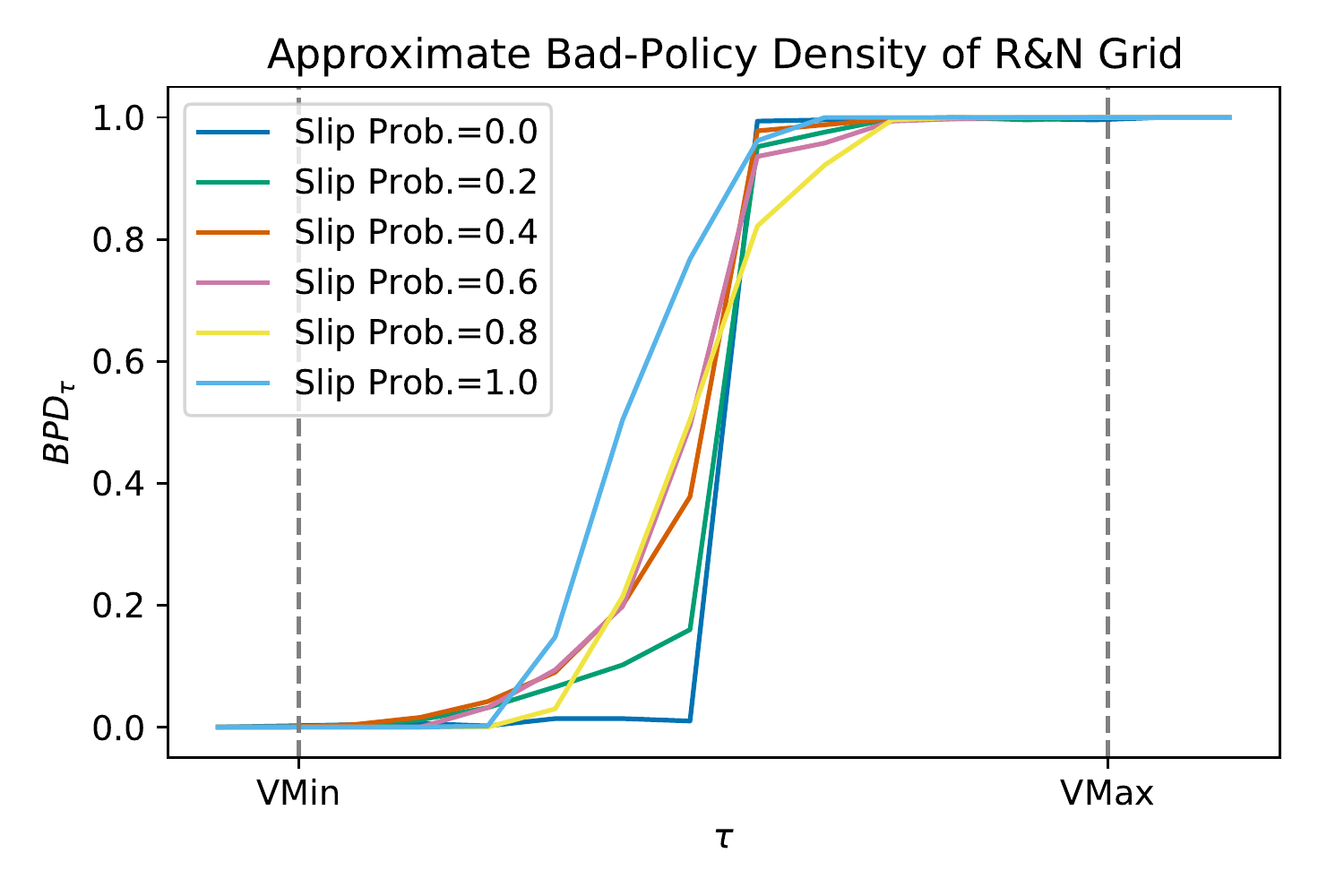}}
    \subfigure[No Lava]{\includegraphics[width=0.45\textwidth]{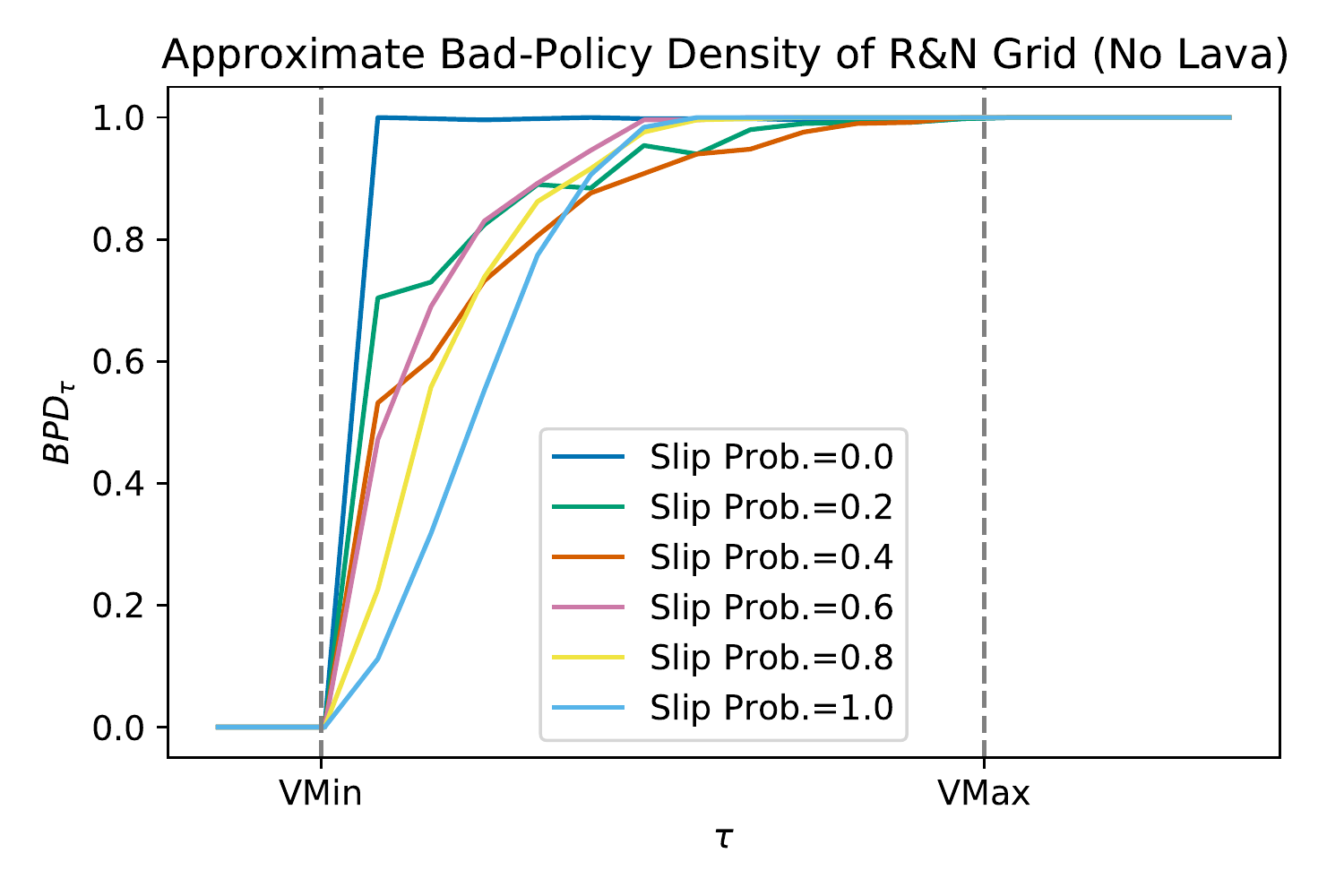}}
    \caption{The BPD of variants of the Russell \& Norvig grid world. On the top plot, we visualize the BPD of the traditional grid that includes a terminal lava state at $(4,2)$, in which the agent receives $-1$ reward. On the bottom, we visualize the BPD of this grid world without the lava state---instead, $(4,2)$ is a non-terminal empty cell that yields $0$ reward. We vary the slip probability in each problem from $0.0$ up to $1.0$.}
    \label{fig:bpd_rn_grid}
\end{figure}

\paragraph{Grid World.} We next experiment with the Russell \& Norvig grid world \cite{russell1995modern}, a 4$\times$3 grid world containing a wall at $(2,2)$, a terminal goal at $(4,3)$ that awards $+1$ upon entering, and a terminal lava pit at $(4,2)$ that awards a $-1$ upon entering. The agent starts at $(1,1)$ and can move in each of the four cardinal directions. We introduce a slip probability $\epsilon$ in which, for each $(s,a)$ pair there is an $\epsilon$ probability that the agent will execute an action uniformly at random with probability each time step. In our experiment, we inspect the $\BPD_\tau(M)$ for different settings of $\epsilon$ from $0.0$ up to $1.0$. 
Given the size of the policy space of even this small grid world, we here use the additive approximation method to estimate $\BPD_\tau(M)$ for each $\tau$ and $M$ combination. We sample 500 policies at random and evaluate them to form our estimate of $\BPD_\tau(M)$. Note that the additive approximation method is only suitable for MDPs where $\BPD_\tau(M)$ is non-negligibly distant from one or zero---otherwise it is likely that estimate will always yield zero or one, despite the existence of several good or bad policies. For this reason, we emphasize the significance and important of \autoref{thm:bpd_poly_approx}, though note that there is still a remaining step to make this approach practical.

%
Result are presented in \autoref{fig:bpd_rn_grid}. On the top, we show the estimated BPD of the grid that contains the lava cell for each value of $\tau$. On the bottom, we show the estimated BPD of the grid with the lava cell replaced by an empty cell (so the agent receives +0 and does not terminate at $(4,2)$). We again vary $\tau$ from \textsc{VMin} up to \textsc{VMax} in increments of $\frac{\textsc{VMax} - \textsc{VMin}}{12}$. 

First note that we lose monotonicity because we are \textit{estimating} $\BPD_\tau(M)$ rather than computing it exactly. Next, observe that for any middling choice of $\tau$ (less than roughly $\frac{\textsc{VMax} - \textsc{VMin}}{2}$), we find the problem to be trivially easy for lower choices of slip probability when lava is present. This is because only a few policies make their way to the lava cell---thus, because value range includes the costly policies that go directly into the lava, the bad policies are only those that move directly to the lava. Even the uniform random policy only has a small probability of arriving at the lava, and thus even the light blue curve (when the slip probability is 1.0) decays toward 0 for lower values of $\tau$. Conversely, once the $\tau$ threshold requires the agent actually reaches the goal in a timely fashion, nearly all policies are considered below optimal, and thus the problem becomes more difficult. While these dramatic swings are interesting and anticipated, we suspect that for MDPs of practical interest, $\log_{\BPD_\tau(M)}$ may be the more interesting property to inspect as it will help distinguish when there are a handful of good policies as opposed to several handfuls.


In the case with no lava, we first note that the range $[\textsc{VMin},\textsc{VMax}]$ is in fact different, as there is no longer a negative reward in the MDP. Hence, we find that when the threshold increases past $\tau=0$, many policies are quickly taken to be bad, as they do not reach the goal. The higher the slip probability, the more policies that can be considered good, as the stochasticity of the environment pushes more policies toward \textit{eventually} finding the goal. In contrast, we note that when the slip probability is zero, there are only a select few (out of $|\mc{A}|^{|\mc{S}|} = 4,194,304$) policies that ever reach the goal, which is why we see the darker blue curve rise quickly around $\tau = \textsc{VMin}$. 

As a final note, we observe that comparing the value \textit{intervals} between the lava and no-lava cases gives the impression that the MDP without lava is harder, in general. This is primarily due to the fact that there are few values of $\tau$ for which any policy might be considered good. In contrast, when lava is present, \textsc{VMin} is considerably lower. We suggest these facts may be important when considering generalizations of the BPD that avoid explicit dependence on $\tau$: Such generalizations may need to account for the \textit{range} of values policies can take on. One route to incorporating such considerations is to move to a probabilistic view of the $\BPD_\tau(M)$, in which we assess the \textit{probability} of sampling a policy below a particular value threshold.

\end{document}